\newcommand{\Rcal}{{\mathcal{R}}}
\newcommand{\Acal}{{\mathcal{A}}}
\newcommand{\Fcal}{{\mathcal{F}}}
\renewcommand{\a}{{\bf a}}
\renewcommand{\u}{{\bf u}}
\renewcommand{\v}{{\bf v}}
\newcommand{\x}{{\bf x}}
\newcommand{\y}{{\bf y}}
\newcommand{\z}{{\bf z}}
\newcommand{\Scal}{{\mathcal{S}}}
\newcommand{\Ocal}[1]{{\mathcal{O}\left(#1\right)}}
\newcommand{\Xcal}{{\mathcal{X}}}
\newcommand{\Lcal}{{\mathcal{L}}}
\newcommand{\argmin}{\operatornamewithlimits{argmin}}
\newcommand{\lrincir}[1]{\left( #1 \right)}
\newcommand{\lrnorm}[1]{\left\lVert#1\right\rVert}
\newcommand{\lrangle}[1]{\left\langle#1 \right\rangle}
\newcommand{\EE}{\mathop{\mathbb{E}}}
\newcommand{\RR}{\mathbb{R}}
\newcommand{\refabovecir}[2]{\displaystyle_{#1}^{#2}}
\newtheorem{Theorem}{\bf{Theorem}}
\newtheorem{Corollary}{\bf{Corollary}}
\newtheorem{Lemma}{\bf{Lemma}}
\newtheorem{Remark}{\bf{Remark}}
\newtheorem{Assumption}{\bf{Assumption}}
\begin{document}

\title{Proximal Online Gradient is Optimum for Dynamic Regret: A General Lower Bound}

\author{$^1$Yawei Zhao\footnote{ represents equal contribution.},
        $^2$Shuang Qiu$^{\ast}$,
        $^3$Ji Liu \\
$^1$National University of Defense Technology, Changsha, China. \\  
$^2$University of Michigan, Ann Arbor, USA. \\
$^3$Uiversity of Rochester, Rochester, USA. \\
\texttt{zhaoyawei@nudt.edu.cn}, \texttt{qiush@umich.edu}, \texttt{ji.liu.uwisc@gmail.com}.
}

\date{}

\maketitle

\begin{abstract}
In online learning, the \emph{dynamic} regret metric chooses the reference (optimal) solution that may change over time, while the typical (static) regret metric assumes the reference solution to be constant over the whole time horizon. The dynamic regret metric is particularly interesting for applications such as online recommendation (since the customers' preference always evolves over time). While the online gradient method has been shown to be optimal for the static regret metric, the optimal algorithm for the dynamic regret remains unknown. In this paper, we show that proximal online gradient (a general version of online gradient) is optimum to the dynamic regret by showing that the proved lower bound matches the upper bound. It is highlighted that we provide a new and general lower bound of dynamic regret. It provides new understanding about the difficulty to follow the dynamics in the online setting.  
\end{abstract}

\section{Introduction}

Online learning \cite{Zinkevich:2003,ShalevShwartz:2012dz,Hazan2016Introduction,pmlr-v51-mohri16,Zhang:2018tu,pmlr-v54-jun17a,pmlr-v23-jain12,pmlr-v70-zhang17g} is a hot research topic for the last decade of years, due to its application in practices such as online recommendation \cite{pmlr-v51-chaudhuri16},  online collaborative filtering \cite{Liu2017,Awerbuch:2007:OCF}, moving object detection \cite{Nair:2004} and many others, as well as its close connection with other research areas such as stochastic optimization \cite{DBLP:rakhlin,pmlr-v84-liu18a}, image retrieval \cite{Gao:2017:SOL}, multiple kernel learning \cite{JMLR:v17:14-148,pmlr-v84-shen18a}, and bandit problems \cite{Flaxman:2005um,Arora:2012ta,pmlr-v54-kwon17a,pmlr-v51-kocak16}, etc.

The typical objective function in online learning is to minimize the (static) regret defined below
\begin{align}
\sum_{t=1}^T f_t(\x_t) - \underbrace{\min_{\x\in\Xcal}\sum_{t=1}^T f_t(\x)}_{\text{the optimal reference}}, \label{eq:oldregret}
\end{align}
where $\x_t$ is the decision made at step $t$ after receiving the information before that (e.g., $\{\nabla f_s(\x_s), f_s(\x_s)\}_{s=1}^{t-1}$). The optimal reference is chosen at the point that minimizes the sum of all component functions up to time $T$. However, the way to decide the optimal reference may not fit some important applications in practice. For example, in the recommendation task, $f_t(\x)$ is the regret at time $t$ decided by the $t$-th coming customer and our recommendation strategy $\x$.  Based on the definition of regret in \eqref{eq:oldregret}, it implicitly assumes that the optimal recommendation strategy is constant over time, which is not necessarily true for the recommendation task (as well as many other applications) since the costumers' preference usually evolves over time.

\cite{Zinkevich:2003} proposed to use the \emph{dynamic} regret as the metric for online learning, that allows the optimal strategy changing over time. More specifically, it is defined by
\begin{align}
\Rcal_{T}^{A} := & \sum_{t=1}^T f_t(\x_t) - \min_{\{\y_t\}_{t=1}^T \in \mathcal{L}_{D_{0}}^T } \sum_{t=1}^T f_t(\y_t), \label{eq:dregret}
\end{align} where $A$ denotes the algorithm that decides $\x_t$ iteratively, $\{\y_t\}_{t=1}^T$ is short for a sequence $\{\y_1, \y_2, \cdots, \y_T\}$, and the dynamics upper bound $\mathcal{L}_{D_{0}}^T$ is defined by
\begin{align}
\mathcal{L}_{D_{0}}^T := \left \{\{\y_t\}_{t=1}^T : \sum_{t=1}^{T-1} \lrnorm{ \y_{t+1} - \y_t} \le D_0 \right \}.  \label{eq:D0}
\end{align}
It was shown that the dynamic regret of Online Gradient (OG) is bounded \cite{Hall:2013vr,Hall:2015ct,Zinkevich:2003} by
\begin{align}
\Rcal_T^{\textsc{OG}} \lesssim \sqrt{T} + \sqrt{T}D_0. \label{eq:ogdbound}
\end{align}
where $\lesssim$ means ``less than equal up to a constant factor''. 
This reminds people to ask a few fundamental questions: 
\begin{itemize}
\item As we know the dependence on $T$ is tight, since OG is optimum for static regret. But, is the dependence to the dynamics $D_0$ tight? In other words, Is OG also optimal for dynamic regret?
\item Is this bound tight enough? If no, how to design a ``smarter'' algorithm to follow the dynamics?
\item How difficult to follow dynamics in online learning?
\end{itemize}
Although the dynamic regret receives more and more attention recently \cite{Mokhtari:2016jz,Yang:2016ud,Zhang:2016wl,Shahrampour:2018dm,Hall:2015ct,Jadbabaie:2015wg} and some successive studies claim to improve this result by considering specific functions types (e.g., strongly convex $f_t$), or considering different definitions of dynamic regret, these fundamental questions still remain unsolved. 

In this paper, we consider a more general setup for the problem
\begin{equation}
f_t(\x) = F_t(\x) + H(\x), 
\label{eq:f_t}
\end{equation}
with $F_t(\x)$ and $H(\x)$ being only convex and closed, and a more general definition for dynamic constraint in \eqref{eq:Dbeta}
\begin{align}
\mathcal{L}_{D_{\beta}}^T := \left \{\{\y_t\}_{t=1}^T : \sum_{t=1}^{T-1} t^{\beta}\cdot \lrnorm{\y_{t+1} - \y_t} \le D_{\beta} \right \}.  \label{eq:Dbeta}
\end{align}
where $\beta$ and $D_{\beta}$ are the pre-defined parameters to restrict the change of reference models over time. We show that the upper bound of the Proximal Online Gradient (POG) algorithm, can achieve
\begin{align}
\Rcal_T^{\textsc{POG}} \lesssim \sqrt{T} + \sqrt{T^{1-\beta}\cdot D_{\beta}}. \label{eq:POGbound}
\end{align} When $\beta = 0$ and $H(\x)\equiv0$, \eqref{eq:POGbound} recovers the early result in \eqref{eq:ogdbound}. But, \eqref{eq:POGbound} still holds for proximal mapping when updating $\x_t$. When $\beta > 0$, since $D_{\beta} < D_0 T^{\beta}$, \eqref{eq:POGbound} is slightly better than the proved special case in \eqref{eq:ogdbound}.


To understand the difficulty of following dynamics in online learning, we derive the lower bound (that measures the dynamic regret by the \emph{optimal} algorithm) and show that the proved upper bound for POG matches the lower bound up to a constant factor, which indicates POG is an optimal algorithm even for dynamic regret (not just for static regret).

\section{Related work}
In this section, we outline and review the existing work about online learning problem with the regret in static and dynamic environments briefly.
\subsection{Static Regret}
Online gradient in the static environment has been extensively investigated for the last decade of years \cite{ShalevShwartz:2012dz,Hazan2016Introduction,Duchi:2011}. Specifically, when $f_t$ is strongly convex, the regret of online gradient is $\Ocal{\log T}$. When $f_t(\cdot)$ is only convex, the regret of online gradient is $\Ocal{\sqrt{T}}$. 

\subsection{Dynamic Regret}
\cite{Zinkevich:2003} obtains the regret in the order of $\Ocal{\sqrt{T}D_0+\sqrt{T}}$ for the convex function $f_t$. Similarly, assume the dynamic constraint is defined by $\sum_{t=1}^{T-1} \lrnorm{\y_{t+1} - \Phi(\y_t)}\le D_0$, where $\Phi(\cdot)$ provides the prediction about the dynamic environment. When $\Phi(\y_t)$ predict the dynamic environment accurately, \cite{Hall:2013vr,Hall:2015ct} obtain a better regret than \cite{Zinkevich:2003}, but it is still bounded by $\Ocal{\sqrt{T}D_0+\sqrt{T}}$.

Additionally, assume $f_t$ is $\alpha$ strongly convex and $\beta$ smooth, and the dynamic constraint is defined by $D^{\ast} := \sum_{t=1}^{T-1}\lrnorm{\y_{t+1}^\ast - \y_t^\ast},\text{~where~} \y_t^\ast := \argmin_{\y\in\Xcal} f_t(\y)$. \cite{Mokhtari:2016jz} obtains $\Ocal{D^{\ast}}$ regret.  When querying noisy gradient, \cite{Bedi:2018ef} obtains $\Ocal{D^{\ast} + \mathcal{E}}$ regret, where $\mathcal{E}$ is the cumulative gradient error. \cite{Yang:2016ud,pmlr-v84-gao18a} extend it for non-strongly convex and non-convex functions, respectively. \cite{Shahrampour:2018dm} extends it to the decentrialized setting\footnote{The definition of $D^{\ast}$ is changed slightly in the decentrialized setting.}.  Furthermore, define $S^{\ast} := \sum_{t=1}^{T-1}\lrnorm{\y_{t+1}^\ast - \y_t^\ast}^2, \text{~where~} \y_t^\ast := \argmin_{\y\in\Xcal} f_t(\y)$. When querying $\Ocal{\kappa}$ with $\kappa := \frac{\beta}{\alpha}$ gradients for every iteration, \cite{Zhang:2016wl} improves the dynamic regret to be $\Ocal{\min\{D^{\ast}, S^{\ast}\}}$. Comparing with the previous work, we obtain a tight regret, and our analysis does not assume the smoothness and strong convexity of $f_t$.   

Other regularities including the functional variation \cite{pmlr-v48-jenatton16,Zhu:2015tr,Besbes:2015gb,Zhang:2018tu}, the gradient variation \cite{Chiang2012Online}, and the mixed regularity \cite{Jadbabaie:2015wg,Chen:2017tt,Jenatton:2016wp} have been investigated to bound the dynamic regret. Those different regularities cannot be compared directly because that they measure different aspects of the variation in the dynamic environment. In the paper, we use \eqref{eq:Dbeta} to bound the regret, and it is the future work to extend our analysis to other regularities.

\cite{Gyorgy:2016} studies a dynamic regret\footnote{It is called shifting regret in \cite{Gyorgy:2016}. To avoid the confusion with many papers that will be discussed in the following subsection, the shifting regret in this paper is defined in a different way from \cite{Gyorgy:2016}.} in a slightly more general setting than \eqref{eq:D0} by relaxing the distance metric $\|\y_{t+1} - \y_t\|$ to a general $\ell_p$ norm $\|\y_{t+1} - \y_t\|_p$ with $p\in (1, 2]$. They obtain an upper bound $\Ocal{\sqrt{D_0T+T}}$ for an algorithm namely \textit{TMD}. This result is essentially consistent with our upper bound, but we consider a different algorithm and a different generalization of the dynamic regret definition, and provide a lower bound more importantly.

Recently, \cite{NeurIPS:2018:ZhangB} provides a lower bound for the case of $\beta=0$ in \eqref{eq:Dbeta}. Comparing with the known result, our lower bound holds for $0\le \beta <1$, and thus is more general. As far as we know, it is the first lower bound for the dynamic regret in the case of $\beta>0$. Besides, the previous result only holds for smooth $f_t$, but our lower bound still holds for non-smooth $f_t$. \cite{NeurIPS:2018:ZhangB} also provides an optimal online method. But, the method is limited to work in the \textit{expert} setting, and requires the smoothness of $f_t$. Our proposed online methods does not have those limitations.

\subsection{Shifting regret (or tracking regret)}
The $M-$shifting regret of an algorithm $A\in\Acal$ is defined by \cite{Herbster1998,Gyorgy:2005wo,Gyorgy:2012wa,Gyorgy:2016,Mourtada:2017vn,JMLR:v17:13-533,NIPS2016_6536,cesabianchi:hal,pmlr-v84-mohri18a,pmlr-v54-jun17a}
\begin{align}
\label{equa_definition_shift_regret}
\widetilde{\Rcal}^{A}_T := \sum_{t=1}^T f_t(\x_t) - \min_{\{\y_t\}_{t=1}^T\in \Lcal^T_{M } }\sum_{t=1}^T f_t(\y_t), 
\end{align} where $\Lcal^T_{M }  =  \left\{ \{\y_t\}_{t=1}^T : \sum_{t=1}^{T-1} \mathbbm{1}\{\y_{t+1} \neq \y_t \} \le  M  \right\}$.
Here, the dynamics is modeled by the number of changes of the reference sequence $\{\y_t\}_{t=1}^T$. 
The shifting regret is closely related to the dynamic regret and can be considered as a variation of dynamic regret, and is usually studied in the setting of learning with expert advice. The result in \cite{Gyorgy:2012wa,Daniely:2015} implies an upper bound $\Ocal{\sqrt{MT\log^2 T}}$ for the shifting regret. The results in both \cite{Luo:2015ud} and \cite{pmlr-v54-jun17a} imply an improved upper bound to $\Ocal{\sqrt{MT\log T}}$.

\section{Notations and Assumptions}
In this section, we introduce notations and important assumptions for the online learning algorithm used throughout this paper.
\subsection{Notations}
Throughout this paper, we use the following notations.
\begin{itemize}
\item $\Acal$ represents the family of all possible online algorithms.

\item $\Fcal$ represents the family of loss functions available to the adversary, where for any loss function $f_t \in\Fcal: \Xcal\subset \RR^d \mapsto \RR$, $f_t(\x) = F_t(\x) + H(\x)$ satisfies Assumption \ref{assumption_convex_F_H} and Assumption \ref{assumption_bound_gradient_domain}. $\Fcal^T$ denotes the function product space by $\underbrace{\Fcal \times \Fcal \times \cdots \times \Fcal}_{T\ \text{times}}$.

\item $\{\u_t\}_{t=1}^T$ represents a sequence of $T$ vectors, namely, $\{\u_1, \u_2, ..., \u_T\}$. $\{f_t\}_{t=1}^T$ denotes a sequence of $T$ functions, which is $\{f_1, f_2, \cdots, f_T \}$.

\item $\Rcal_{T}^{A}$ is the regret for a loss function sequence $\{f_t\}_{t=1}^T \in \Fcal^T$ with a learning algorithm $A \in \Acal$ where $A$ can be POG or OG.

\item $\lrnorm{\cdot}_p$ denotes the $\ell_p$ norm. $\lrnorm{\cdot}$ represents the $\ell_2$ norm by default.

\item $\lesssim$ means ``less than equal up to a constant factor'', and $\gtrsim$ means ``greater than equal up to a constant factor''. $\partial$ represents the subgradient operator. $\EE$ represents the mathematical expectation.

\end{itemize}

\subsection{Assumptions}
We use the following assumptions to analyze the regret of the online gradient.
\begin{Assumption}
\label{assumption_convex_F_H}
Functions $F_t: \Xcal \subset \RR^d \mapsto \RR$ for all $t \in [T]$ and $H: \Xcal \subset \RR^d \mapsto \RR$ are convex and closed but possibly nonsmooth. Particularly, $f_t \in \Fcal$ is defined as $ f_t(\x) = F_t(\x) + H(\x)$.
\end{Assumption}

\begin{Assumption}
\label{assumption_bound_gradient_domain}
The convex compact set $\Xcal$ is the domain for $F_t$ and $H$, and $\|\x -\y \|^2 \leq R$ for any $\x,\y \in \Xcal$. Besides, for any $\x\in\Xcal$ and function $F_t$, $\lrnorm{G_t(\x)}^2 \le G$, where $G_t(\x) \in \partial F_t(\x)$.
\end{Assumption}

\begin{algorithm}[!t]
   \caption{POG: Proximal Online Gradient.}
   \label{algo_pog}
   \begin{algorithmic}[1]
   \Require The learning rate $\eta_t$ with $1\le t\le T$.
       \For {$t=1,2, ..., T$}
            \State Predict $\x_t$.
            \State  Observe the loss function $f_t$ with $F_t$ and $H$, and suffer loss $f_t(\x_t) = F_t(\x_t) + H(\x_t)$.
            \State Query subgradient $G_t(\x_t) \in \partial F_t(\x_t)$.  
            \State $\x_{t+1} = \mathrm{\mathbf{prox}}_{H,\eta_t}(\x_t - \eta_t G_t(\x_t)).$ 
       \EndFor
       \State \textbf{return} $\x_{T+1}$
   \end{algorithmic}
\end{algorithm}

\section{Algorithm}

We use the proximal online gradient (POG) for solving the online learning problem with $f_t(\cdot)$ in the form of \eqref{eq:f_t}. The POG algorithm is a general version of OG for taking care of the regularizer component $H(\cdot)$ in $f_t(\cdot)$. The complete POG algorithm is presented in Algorithm~\ref{algo_pog}. Line 4 of Algorithm~\ref{algo_pog} is the proximal gradient descent step defined by
\begin{align*}
    \x_{t+1} = \mathrm{\mathbf{prox}}_{H,\eta_t}(\x_t - \eta_t G_t(\x_t)),
\end{align*}
where the proximal operator is defined as 
\begin{align*}
    \mathrm{\mathbf{prox}}_{H,\eta_t} (\x') := \arg \min_{\x \in \Xcal} \left \{ H(\x) + \frac{1}{2\eta_t} \|\x-\x'\|^2  \right \}.
\end{align*}
Therefore, the update of $\x_{t+1}$ is also equivalent to 
\begin{align*}
\x_{t+1} = & \mathrm{\mathbf{prox}}_{H,\eta_t}(\x_t - \eta_t G_t(\x_t)) = \arg \min_{\x\in\Xcal} \lrangle{G_t(\x_t), \x} + \frac{1}{2\eta_t} \lrnorm{\x -\x_t}^2 + H(\x).
\end{align*}
The POG algorithm reduces to the OG algorithm when $H(\cdot)$ is a constant function.


\section{Theoretical results}

Recall that we now consider an online learning problem with a dynamic constraint
\begin{align}
\nonumber
\mathcal{L}_{D_{\beta}}^T := \left \{\{\y_t\}_{t=1}^T : \sum_{t=1}^{T-1} t^{\beta}\cdot \lrnorm{\y_{t+1} - \y_t} \le D_{\beta} \right \}, 
\end{align} 
which is more general comparing with the previous definition of the dynamic constraint 
$\mathcal{L}_{D_{0}}^T$ defined in \eqref{eq:D0}.

When $\beta = 0$, $\mathcal{L}_{D_{\beta}}^T$ reduces to the previous definition of the dynamic constraint. Comparing with the previous definition, when $\beta\ge 0$, $D_{\beta}$ allocates larger weights for the future parts of the dynamics than the previous parts. 

\begin{Remark}
It is worth noting that $D_{\beta}$ is a pre-defined parameter to restrict the change of reference models.
\end{Remark}

In this section, we first present an lower bound which was not well studied in previous literature to our best knowledge. Then, we prove an upper bound for the regret based on our general dynamic constraints via proximal online gradient, which holds for a general dynamic regret, instead of $\beta=0$ shown in previous work. We will show that our proved upper bound matches the lower bound, implying the optimality of proximal online gradient algorithm.

\subsection{General lower bound for online convex optimization}
Once we obtain the upper bound for dynamic regret via POG, namely $\sup_{\{f_t\}_{t=1}^T\in\Fcal^T} \Rcal_T^{\textsc{POG}}$, there still remains a question, whether our upper bound's dependency on $D_{\beta}$ and $T$ is tight enough or even optimal. 

Unfortunately, to our best knowledge, this question has not been fully investigated in any existing literature, even for the case of the dynamic regret defined with $D_0$.

To answer this question, we attempt to explore the value of $\sup_{\{f_t\}_{t=1}^T\in\Fcal^T} \Rcal_T^{\textsc{A}}$ for the optimal algorithm $A \in \Acal$, which is formally written as $\inf_{A\in\Acal} \sup_{\{f_t\}_{t=1}^T\in\Fcal^T} \Rcal_T^{\textsc{A}}$. If a lower bound for $\inf_{A\in\Acal} \sup_{\{f_t\}_{t=1}^T\in\Fcal^T} \Rcal_T^{\textsc{A}}$ matches the upper bound in~\eqref{eq:upperbound}, then we can say that POG is optimum for dynamic regret in online learning.



\begin{Theorem}
\label{theorem_lower_bound}
Assume that Assumptions \ref{assumption_convex_F_H} and \ref{assumption_bound_gradient_domain} hold. For any $0\le \beta < 1$, the lower bound for our problem with dynamic regret is 
\begin{align}
\nonumber
\inf_{A \in \mathcal{A}} \sup_{\{f_t\}_{t=1}^T\in\Fcal^T} \Rcal_T^{\textsc{A}} \gtrsim \sqrt{D_{\beta} \cdot T^{1-\beta}} + \sqrt{T},
\end{align}
where $\mathcal{A}$ is the set of all possible learning algorithms. $f_t(\x) = F_t(\x) + H(\x)$, $\forall t \in [T]$, with $\{f_t\}_{t=1}^T \in \Fcal^T$.
\end{Theorem}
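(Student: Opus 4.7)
The plan is to prove the lower bound via Yao's minimax principle applied to a one-dimensional adversarial distribution over loss sequences. I would fix the basic construction $\Xcal=[-1,1]$, $H\equiv 0$, and $F_t(x)=\epsilon_t x$ with $\epsilon_t$ drawn i.i.d.\ uniformly from $\{-1,+1\}$; this family satisfies Assumptions~\ref{assumption_convex_F_H} and \ref{assumption_bound_gradient_domain} with unit subgradient bound. Because $x_t$ is measurable with respect to the history before round $t$ while $\epsilon_t$ is independent of that history, $\EE[\epsilon_t x_t]=0$ for every algorithm, so for any comparator sequence $\{y_t\}$ the expected regret reduces to $\EE\bigl[-\sum_t \epsilon_t y_t\bigr]$, which the offline adversary maximizes by aligning each $y_t\in\{-1,+1\}$ against the realized signs.

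For the $\sqrt{T}$ term (which dominates when $D_\beta\lesssim T^\beta$), I would take the constant comparator $y_t\equiv -\mathrm{sign}\bigl(\sum_s \epsilon_s\bigr)$, which lies in $\mathcal{L}_{D_\beta}^T$ for every $D_\beta\ge 0$, and invoke the Khintchine--Kahane inequality to obtain $\EE\abs{\sum_s\epsilon_s}\gtrsim\sqrt{T}$. For the $\sqrt{D_\beta T^{1-\beta}}$ term I would partition $[T]$ into $K$ consecutive blocks $B_1,\ldots,B_K$ of equal length $L=T/K$ and use a piecewise-constant comparator that on $B_k$ equals $-\mathrm{sign}(S_k)$ with $S_k:=\sum_{t\in B_k}\epsilon_t$. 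Nonzero jumps occur only at the $K-1$ block boundaries and have magnitude at most $2$, so the total dynamics cost is bounded by
\begin{align*}
2L^\beta\sum_{k=1}^{K-1}k^\beta \;\le\; \tfrac{2}{1+\beta}\,L^\beta K^{1+\beta} \;=\; \tfrac{2}{1+\beta}\,T^\beta K.
\end{align*}
Choosing $K=\Theta(D_\beta T^{-\beta})$ (under $D_\beta\gtrsim T^\beta$, the complementary regime already being covered by the $\sqrt{T}$ piece) keeps the sequence inside $\mathcal{L}_{D_\beta}^T$, and independence of $S_1,\ldots,S_K$ together with the Khintchine estimate applied block-wise yields expected regret $\sum_k\EE\abs{S_k}\gtrsim K\sqrt{L}=\sqrt{KT}=\Theta(\sqrt{D_\beta T^{1-\beta}})$.

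Combining the two regimes and transferring the expected-regret lower bound against the random adversary into a worst-case bound against the optimal algorithm via Yao's minimax lemma yields the claimed $\Omega(\sqrt{T}+\sqrt{D_\beta T^{1-\beta}})$. The main obstacle I anticipate is the careful bookkeeping required to (i) round $K$ to an integer while preserving the dynamics constraint with the correct absolute constant, especially as $\beta\to 1^-$ where the $1/(1+\beta)$ factor is still bounded but neighboring estimates become delicate; (ii) verify that the piecewise-constant comparator is genuinely admissible under the $t^\beta$-weighted constraint in $\mathcal{L}_{D_\beta}^T$, rather than under a cruder unweighted estimate; and (iii) cleanly invoke Yao's principle so that the lower bound transfers from deterministic algorithms to all of $\Acal$, including randomized ones---routine but easy to fumble and essential for the stated infimum.
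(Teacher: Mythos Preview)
Your proposal is correct and follows essentially the same approach as the paper: random linear losses with i.i.d.\ Rademacher slopes, the observation that $\EE[\epsilon_t x_t]=0$ so the expected regret reduces to the comparator term, a block-constant comparator whose number of blocks is $\Theta(D_\beta T^{-\beta})$ to satisfy the $t^\beta$-weighted path constraint, and a Khintchine-type estimate $\EE|S_k|\gtrsim\sqrt{L}$ on each block. The only packaging differences are cosmetic: the paper works in the $d$-dimensional unit ball (which gains nothing over your one-dimensional construction) and obtains both the $\sqrt{T}$ and $\sqrt{D_\beta T^{1-\beta}}$ terms from a \emph{single} comparator by splitting the time horizon in half (blocks on the first half, constant on the second), whereas you obtain them via a case split on $D_\beta\gtrless T^\beta$; both devices are standard and equivalent in strength.
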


The discussion for the lower bound is conducted in the following aspects.
\begin{itemize}
\item (\textbf{Insight.}) The lower bound in Theorem~\ref{theorem_lower_bound} can be interpreted by that for any algorithm there always exists a problem (or a function sequence in $\mathcal{F}^T$ such that the dynamic regret is not less than $\sqrt{D_{\beta} \cdot T^{1-\beta}} + \sqrt{T}$ up to a constant factor. It indicates that the lower bound matches with the upper bound shown in \eqref{eq:upperbound}. This theoretical result implies that the proximal online gradient is an optimal algorithm to find decisions in the dynamic environment defined by $D_{\beta}$ and our upper bound (shown in the following section) is also sufficiently tight. In addition, this lower bound also reveals the difficulty of following dynamics in online learning. 
\item (\textbf{Novelty.}) \cite{NeurIPS:2018:ZhangB} shows a lower bound for dynamic regret. Comparing with the known result, our lower bound has the following novelty.
\begin{itemize}
\item (General bound.) Our lower bound holds for any $0\le \beta <1$, but the result in \cite{NeurIPS:2018:ZhangB} only holds for the case of $\beta=0$. When $\beta>0$, it is the first work to show that the dynamic regret is $\Omega\lrincir{\sqrt{D_{\beta} \cdot T^{1-\beta}} + \sqrt{T}}$.
\item (Non-smooth $f_t$.) Our lower bound holds for the non-smooth sequence  $\{f_t\}_{t=1}^T$, but  \cite{NeurIPS:2018:ZhangB} only holds for the smooth sequence  $\{f_t\}_{t=1}^T$. 
\end{itemize}  
\end{itemize}

\subsection{Upper bound for a general dynamic regret ($0\le \beta < 1$)}
\label{subsection_upper_bound}
 We provide the upper bound for the POG algorithm described in Algorithm~\ref{algo_pog} in following. The complete proof is provided in the Appendix. It essentially follows the analysis framework for the online gradient algorithm. \textit{The main novelty lies that our analysis is more general than previous work. Our upper bound holds for a general dynamic regret, that is, $0\le \beta <1$, instead of $\beta =0$ in previous studies.}  
\begin{Theorem}
\label{theorem_our_upper_bound}
Let  $0\le \beta < 1$. Choose the positive learning rate sequence $\{\eta_t\}_{t=1}^T$ in Algorithm~\ref{algo_pog} to be non-increasing. Under Assumptions \ref{assumption_convex_F_H} and \ref{assumption_bound_gradient_domain}, the following upper bound for the dynamic regret holds 
\begin{align}
\label{equa_theorem_our_upper_bound}
\sup_{\{f_t\}_{t=1}^T\in\Fcal^T} \Rcal_T^{\textsc{POG}} \le & \sqrt{R} \max_{\{\eta_t\}_{t=1}^T} \left \{\frac{1}{\eta_t \cdot t^{\beta}} \right\} \cdot D_{\beta} + \frac{R}{2\eta_T} + \frac{G}{2} \sum_{t=1}^T \eta_t + H(\x_1)- H(\x_{T+1}).
\end{align}
\end{Theorem}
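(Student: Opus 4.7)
The plan is to take any comparator sequence $\{\y_t\}_{t=1}^T \in \mathcal{L}_{D_\beta}^T$, bound $\sum_t f_t(\x_t)-\sum_t f_t(\y_t)$ by the RHS of \eqref{equa_theorem_our_upper_bound}, and then observe that the RHS does not depend on the choice of $\{\y_t\}$ so it is also an upper bound for the supremum in the dynamic regret. To derive the per-step bound I would start from convexity of $F_t$ to write $F_t(\x_t)-F_t(\y_t)\le \langle G_t(\x_t),\x_t-\y_t\rangle$, then split this as $\langle G_t(\x_t),\x_t-\x_{t+1}\rangle+\langle G_t(\x_t),\x_{t+1}-\y_t\rangle$. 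The first piece is handled by Young's inequality using $\|G_t(\x_t)\|^2\le G$, yielding $\tfrac{\eta_t G}{2}+\tfrac{1}{2\eta_t}\|\x_t-\x_{t+1}\|^2$. For the second piece I would invoke the first-order optimality condition of the proximal subproblem defining $\x_{t+1}$: there exists $s_{t+1}\in\partial H(\x_{t+1})$ with $G_t(\x_t)+\tfrac{1}{\eta_t}(\x_{t+1}-\x_t)+s_{t+1}$ lying in the normal cone at $\x_{t+1}$, so $\langle G_t(\x_t),\x_{t+1}-\y_t\rangle\le \tfrac{1}{\eta_t}\langle \x_t-\x_{t+1},\x_{t+1}-\y_t\rangle+H(\y_t)-H(\x_{t+1})$. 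Expanding the inner product by the three-point identity cancels the $\tfrac{1}{2\eta_t}\|\x_t-\x_{t+1}\|^2$ term and leaves a per-step inequality
\[
f_t(\x_t)-f_t(\y_t) \le \tfrac{\eta_t G}{2}+\tfrac{1}{2\eta_t}\bigl(\|\x_t-\y_t\|^2-\|\x_{t+1}-\y_t\|^2\bigr)+H(\x_t)-H(\x_{t+1}).
\]

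Summing over $t$, the $H$ piece telescopes to $H(\x_1)-H(\x_{T+1})$ and the $\tfrac{G}{2}\sum_t\eta_t$ term already matches the statement. The main obstacle, and the only real subtlety, is telescoping $\sum_t \tfrac{1}{2\eta_t}(\|\x_t-\y_t\|^2-\|\x_{t+1}-\y_t\|^2)$ when $\y_t$ varies. Rearranging the sum to pair $-\tfrac{1}{2\eta_t}\|\x_{t+1}-\y_t\|^2$ with $\tfrac{1}{2\eta_{t+1}}\|\x_{t+1}-\y_{t+1}\|^2$, I would control the difference $\|\x_{t+1}-\y_{t+1}\|^2-\|\x_{t+1}-\y_t\|^2$ using the factorization $(a^2-b^2)=(a+b)(a-b)$ together with the triangle inequality and the diameter bound from Assumption \ref{assumption_bound_gradient_domain}, giving
\[
\|\x_{t+1}-\y_{t+1}\|^2-\|\x_{t+1}-\y_t\|^2\le 2\sqrt{R}\,\|\y_{t+1}-\y_t\|.
\]
The remaining mismatch between $\tfrac{1}{2\eta_{t+1}}$ and $\tfrac{1}{2\eta_t}$ is nonnegative since $\{\eta_t\}$ is non-increasing, so I bound the associated weight-change term by $R(\tfrac{1}{2\eta_{t+1}}-\tfrac{1}{2\eta_t})$ and telescope to $\tfrac{R}{2\eta_T}-\tfrac{R}{2\eta_1}$, which together with the leading $\tfrac{\|\x_1-\y_1\|^2}{2\eta_1}\le \tfrac{R}{2\eta_1}$ collapses cleanly to $\tfrac{R}{2\eta_T}$.

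Finally, to surface the $t^\beta$ weighting of the dynamics constraint, I would factor
\[
\sum_{t=1}^{T-1}\tfrac{\sqrt{R}}{\eta_{t+1}}\|\y_{t+1}-\y_t\|=\sqrt{R}\sum_{t=1}^{T-1}\tfrac{1}{\eta_{t+1}\,t^\beta}\cdot t^\beta\|\y_{t+1}-\y_t\|\le \sqrt{R}\,\max_t\!\Bigl\{\tfrac{1}{\eta_t t^\beta}\Bigr\}\cdot D_\beta,
\]
using $\{\y_t\}\in\mathcal{L}_{D_\beta}^T$. Combining these pieces produces precisely the four-term bound in \eqref{equa_theorem_our_upper_bound}. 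Because every inequality is uniform in the comparator sequence and uniform in $\{f_t\}\in\Fcal^T$, taking the supremum on the left preserves the bound, completing the proof.
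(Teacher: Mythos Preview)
Your plan is essentially the paper's proof: the per-step bound you derive is exactly Lemma~\ref{lemma_1} (the paper writes it with Bregman-divergence notation but the content is identical), and your telescoping argument is exactly Lemma~\ref{lemma_recurrsive_bound}. The $(a^2-b^2)$ factorization you use for the comparator shift is equivalent to the paper's law-of-cosines step.

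There is, however, one genuine indexing slip in your final display. From the decomposition you describe, the coefficient multiplying $\|\y_{t+1}-\y_t\|$ comes out as $\tfrac{\sqrt{R}}{\eta_{t+1}}$, and you then assert
\[
\frac{1}{\eta_{t+1}\,t^{\beta}} \;\le\; \max_{s}\Bigl\{\tfrac{1}{\eta_s s^{\beta}}\Bigr\}.
\]
This is not true in general when $\beta>0$: for instance with $\eta_t=t^{-\gamma}$ and $\gamma\ge\beta$, the left side at $t=T-1$ equals $T^{\gamma}/(T-1)^{\beta}$, which strictly exceeds $\max_s s^{\gamma-\beta}=T^{\gamma-\beta}$. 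So the inequality you need fails, and the bound in the theorem (which carries an exact constant, not a big-$O$) does not follow from your version of the split.

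The fix is one line: split the paired term the other way,
\[
\tfrac{1}{2\eta_{t+1}}\|\x_{t+1}-\y_{t+1}\|^2-\tfrac{1}{2\eta_t}\|\x_{t+1}-\y_t\|^2
=\tfrac{1}{2\eta_t}\bigl(\|\x_{t+1}-\y_{t+1}\|^2-\|\x_{t+1}-\y_t\|^2\bigr)
+\Bigl(\tfrac{1}{2\eta_{t+1}}-\tfrac{1}{2\eta_t}\Bigr)\|\x_{t+1}-\y_{t+1}\|^2,
\]
so that the comparator-shift term carries the coefficient $\tfrac{1}{\eta_t}$ rather than $\tfrac{1}{\eta_{t+1}}$. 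Then $\tfrac{1}{\eta_t t^{\beta}}\le\max_s\{\tfrac{1}{\eta_s s^{\beta}}\}$ is immediate, and the rest of your argument goes through verbatim to give exactly \eqref{equa_theorem_our_upper_bound}. This is precisely the split the paper uses in the proof of Lemma~\ref{lemma_recurrsive_bound}.
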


To the make the dynamic regret more clear, we choose the learning rate appropriately, which leads to the following result.
\begin{Corollary}
\label{corollary_unify_upper_bound_our}
For any $0\le \beta < 1$, we choose an appropriate $\gamma$  such that $\gamma \ge \beta$ and $0\le \gamma < 1$. Set the learning rate $\eta_t$ by
\begin{align}
\nonumber
\eta_t = t^{-\gamma} \cdot \sqrt{\frac{(1-\gamma)\lrincir{2\sqrt{R}T^{2\gamma - \beta-1}D_{\beta} + RT^{2\gamma-1}}}{G}}
\end{align} in Algorithm \ref{algo_pog}. Under Assumptions \ref{assumption_convex_F_H} and \ref{assumption_bound_gradient_domain}, we have  
\begin{align}
&\sup_{f_{t=1}^T\in\Fcal^T} \Rcal_T^{\textsc{POG}} 
\lesssim  \sqrt{D_{\beta} \cdot T^{1-\beta}} + \sqrt{T}.
\label{eq:upperbound}
\end{align}
\end{Corollary}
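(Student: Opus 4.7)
The plan is to substitute the prescribed learning rate into the general upper bound of Theorem~\ref{theorem_our_upper_bound} and then collapse each of the four resulting contributions to the desired form $\sqrt{D_\beta T^{1-\beta}} + \sqrt{T}$. Writing $K := (1-\gamma)\bigl(2\sqrt{R}\,T^{2\gamma-\beta-1} D_\beta + R\,T^{2\gamma-1}\bigr)$, the prescribed learning rate is simply $\eta_t = t^{-\gamma}\sqrt{K/G}$ with $K$ independent of $t$. Since $\gamma \ge 0$, the sequence $\{\eta_t\}_{t=1}^T$ is non-increasing, so Theorem~\ref{theorem_our_upper_bound} applies.

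First I would compute the individual factors appearing in \eqref{equa_theorem_our_upper_bound}. Because $\gamma \ge \beta$, the factor $1/(\eta_t t^\beta) = t^{\gamma-\beta}\sqrt{G/K}$ is non-decreasing in $t$, so $\max_t 1/(\eta_t t^\beta) = T^{\gamma-\beta}\sqrt{G/K}$. Using $\sum_{t=1}^{T} t^{-\gamma} \le T^{1-\gamma}/(1-\gamma)$, valid since $0 \le \gamma < 1$, I get $\sum_{t=1}^T \eta_t \le \sqrt{K/G}\,T^{1-\gamma}/(1-\gamma)$, and trivially $R/(2\eta_T) = \tfrac{1}{2}R T^{\gamma}\sqrt{G/K}$. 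The telescoping boundary term $H(\x_1)-H(\x_{T+1})$ is $O(1)$ since $\Xcal$ is compact and $H$ is closed convex on $\Xcal$.

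The key step is to convert these $K$-dependent expressions into the target bound. I would factor $K = (1-\gamma)\,T^{2\gamma-1}\bigl(2\sqrt{R}\,D_\beta T^{-\beta} + R\bigr)$ and apply two complementary elementary inequalities: the lower bound $\sqrt{a+b}\ge \max\{\sqrt{a},\sqrt{b}\}$ to the two terms in which $\sqrt{K}$ sits in the denominator, and the upper bound $\sqrt{a+b}\le \sqrt{a}+\sqrt{b}$ to the $\sum_t \eta_t$ term in which $\sqrt{K}$ sits in the numerator. For the $D_\beta$-weighted term, choosing $a = 2\sqrt{R}\,D_\beta T^{-\beta}$ in the lower bound yields a contribution $\lesssim R^{1/4}\sqrt{D_\beta T^{1-\beta}}$; for $R/(2\eta_T)$, choosing $b = R$ yields $\lesssim \sqrt{RT}$. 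For $\tfrac{G}{2}\sum_t \eta_t$, the upper bound splits the term into exactly the same two pieces, each again of the required order.

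The main obstacle is not any deep inequality but the exponent bookkeeping: the learning rate entangles $D_\beta$, $T$, $\beta$ and $\gamma$, and the two opposite bounds on $\sqrt{a+b}$ must be placed in exactly the right spots so that every stray factor $T^{\gamma}$ cancels against $T^{\gamma-1/2}$ coming out of $\sqrt{K}$, leaving only the residual exponents $T^{(1-\beta)/2}D_\beta^{1/2}$ and $T^{1/2}$. Once that bookkeeping is done and all four contributions are added, the result is \eqref{eq:upperbound}, completing the proof.
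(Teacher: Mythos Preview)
Your proposal is correct and follows essentially the same approach as the paper. The only difference is organizational: the paper groups the first two terms with the third as $A/\sigma_1 + B\sigma_1$ with $A=\sqrt{R}D_\beta T^{\gamma-\beta}+\tfrac{R}{2}T^\gamma$ and $B=\tfrac{G}{2(1-\gamma)}T^{1-\gamma}$, observes that the prescribed $\sigma_1=\sqrt{K/G}$ is exactly the minimizer $\sqrt{A/B}$, and then applies $\sqrt{a+b}\le\sqrt a+\sqrt b$ once to $2\sqrt{AB}$; you instead bound the three terms separately using the complementary inequalities $1/\sqrt{a+b}\le 1/\sqrt a$, $1/\sqrt{a+b}\le 1/\sqrt b$, and $\sqrt{a+b}\le\sqrt a+\sqrt b$, which amounts to the same computation.
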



To compare the upper bound in \eqref{eq:upperbound} to existing results, we consider the special case which does not include the nonsmooth term $H(\cdot)$ in the objective and a particular choice for $\beta=0$. In such case, our upper bound is $\Ocal{\sqrt{TD_0} + \sqrt{T}}$, which is consistent with the known regret \cite{Gyorgy:2016,NeurIPS:2018:ZhangB}. Meanwhile, it slightly improves the known regret $\Ocal{\sqrt{T}  D_0 + \sqrt{T}}$ \cite{Zinkevich:2003,Hall:2013vr,Hall:2015ct} in the sense that it has a better dependence on $D_0$. When $\beta > 0$, our upper bound is $\Ocal{\sqrt{T^{1-\beta}D_{\beta}} + \sqrt{T}}$, which extends the known result for $\beta=0$ \cite{Gyorgy:2016,NeurIPS:2018:ZhangB}. Additionally, the upper bound in \cite{NeurIPS:2018:ZhangB} holds in the \textit{expert} setting, and requires smoothness of $f_t$. But, our upper bound holds in a general setting including the non-expert setting, and still holds for non-smooth $f_t$.

\textbf{Connections with $M-$shifting regret.} Although the shifting regret defined in \eqref{equa_definition_shift_regret} is different from the dynamic regret considered in this paper, it is worth noting that our result in \eqref{eq:upperbound} also implies an upper bound $\Ocal{\sqrt{MT} + \sqrt{T}}$ with respect to the shifting regret defined in \eqref{equa_definition_shift_regret}. 
\begin{Corollary}
\label{corollary_connect_shift_regret}
Set the learning rate $\eta_t$ by
\begin{align}
\nonumber
\eta_t = t^{-\gamma} \cdot \sqrt{\frac{(1-\gamma)\lrincir{2RT^{2\gamma -1}M + RT^{2\gamma-1}}}{G}}
\end{align} in Algorithm \ref{algo_pog}. Under Assumptions \ref{assumption_convex_F_H} and \ref{assumption_bound_gradient_domain}, we have    
\begin{align}
\nonumber
\sup_{f_{t=1}^T\in\Fcal^T} \widetilde{\Rcal}_T^{\textsc{POG}} \lesssim  \sqrt{M T} + \sqrt{T},
\end{align}
\end{Corollary}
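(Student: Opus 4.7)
\textbf{Proof plan for Corollary~\ref{corollary_connect_shift_regret}.} The plan is to reduce the shifting-regret bound to the dynamic-regret bound already established in Corollary~\ref{corollary_unify_upper_bound_our} with $\beta=0$. The key observation is that a reference sequence with at most $M$ switches automatically lives in an $\Lcal_{D_0}^T$-set for a suitable $D_0$, because each individual switch can only move by the diameter of $\Xcal$.

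First I would establish the set inclusion $\Lcal_M^T \subseteq \Lcal_{D_0}^T$ with $D_0 := M\sqrt{R}$. If $\{\y_t\}_{t=1}^T$ satisfies $\sum_{t=1}^{T-1}\mathbbm{1}\{\y_{t+1}\ne \y_t\} \le M$, then it has at most $M$ non-zero jumps, and every non-zero jump has norm at most $\sqrt{R}$ by the diameter bound in Assumption~\ref{assumption_bound_gradient_domain}; summing yields $\sum_{t=1}^{T-1}\|\y_{t+1}-\y_t\| \le M\sqrt{R} = D_0$. Since enlarging the feasible set can only decrease the minimum of $\sum_t f_t(\y_t)$, we immediately get $\widetilde{\Rcal}_T^{\textsc{POG}} \le \Rcal_T^{\textsc{POG}}$ evaluated with this choice of $D_0$ and with $\beta=0$.

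Second, I would apply Corollary~\ref{corollary_unify_upper_bound_our} with $\beta=0$ and $D_\beta = D_0 = M\sqrt{R}$. Substituting $D_0 = M\sqrt{R}$ into the learning-rate formula of Corollary~\ref{corollary_unify_upper_bound_our} converts the term $2\sqrt{R}\,T^{2\gamma - 1}D_0$ into $2RT^{2\gamma-1}M$, which is exactly the learning rate prescribed in the statement of Corollary~\ref{corollary_connect_shift_regret}; so the hypotheses line up without modification. The resulting dynamic-regret bound is $\sqrt{D_0 T}+\sqrt{T} = \sqrt{M\sqrt{R}\,T}+\sqrt{T} \lesssim \sqrt{MT}+\sqrt{T}$ after absorbing the $R^{1/4}$ factor into the $\lesssim$ constant, yielding the claim.

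There is no genuine obstacle: the argument is a bookkeeping reduction that uses the bounded-domain assumption to convert a discrete switch count $M$ into a continuous path-length budget $M\sqrt{R}$, and then cites the already-proved Corollary~\ref{corollary_unify_upper_bound_our}. The one place requiring care is verifying that plugging $D_0=M\sqrt{R}$ into the learning rate of Corollary~\ref{corollary_unify_upper_bound_our} reproduces exactly the learning rate stated in Corollary~\ref{corollary_connect_shift_regret}; this is a one-line algebraic check that I would write out explicitly so that the reader can see the reduction is tight rather than merely up to constants.
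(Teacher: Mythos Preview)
Your proposal is correct and follows essentially the same approach as the paper: the paper isolates the set inclusion $\Lcal_M^T \subseteq \Lcal_{D_0}^T$ with $D_0 = M\sqrt{R}$ as a separate lemma (Lemma~\ref{lemma_connect_dynamic_regret_shifting_regret_equivalent}), then invokes Corollary~\ref{corollary_unify_upper_bound_our} with $\beta=0$ and $D_0 = M\sqrt{R}$ to conclude. Your explicit verification that the learning-rate formulas match under this substitution is a helpful addition that the paper leaves implicit.
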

where $ \widetilde{\Rcal}_T^{\textsc{POG}}$ follows the definition in \eqref{equa_definition_shift_regret}. This result slightly improves the existing result for shifting regret in \cite{pmlr-v54-jun17a} up to a logarithmic factor. The proof is provided in the appendix.

\section{Conclusion}
\label{sect_conclusion}
The online learning problem with dynamic regret metric is particularly interesting for many real sceneiros. Although the online gradient method
has been shown to be optimal for the static
regret metric, the optimal algorithm for the
dynamic regret remains unknown. This paper studies this problem from a theoretical prespective. We show that proximal online gradient, a general version of online gradient, is optimum to the dynamic regret by showing
that our proved lower bound matches the
upper bound which slightly improves the existing
upper bound.

\section*{Appendix: Proofs}
In this section, we present the detailed proofs for the theorems in our paper. In particular, \textit{ Some necessary lemmas used in proofs to theorems are placed in supplementary materials. }

In our proofs, we abuse the notations of $\partial H(\x)$ a little bit to represent any vector in the subgradient of $H(\x)$. $G_t(\x)$ still represents any vector in $\partial F_t(\x)$. We use $B_{\psi}(\x,\y): = \psi(\x)-\psi(\y) - \langle \psi(\y), \x-\y \rangle$ to denote Bregman divergence w.r.t. the function $\psi$.

\begin{Lemma}
\label{lemma_rademacher_expectation} 
Consider a sequence $\{\v_t\}_{t=1}^{T}$. For any $t \in [T]$, dimensions of $\v_t \in \{\pm 1\}^d$ are i.i.d. sampled from  Rademacher distribution. We have
\begin{align*}
\EE_{\{\v_t\}_{t=1}^T} \lrnorm{\sum_{t=1}^{T} \v_t }_1 \gtrsim d \sqrt{T} 
\end{align*} 
\end{Lemma}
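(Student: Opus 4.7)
The plan is to reduce the vector-valued $\ell_1$ statement to a coordinatewise statement about a simple symmetric random walk, then invoke a standard Rademacher lower bound.

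First I would expand the $\ell_1$ norm and exchange expectation with summation:
\begin{align*}
\EE_{\{\v_t\}_{t=1}^T} \lrnorm{\sum_{t=1}^T \v_t}_1 = \sum_{i=1}^d \EE\lrincir{ \lrabs{S_{T,i}} },
\end{align*}
where $S_{T,i} := \sum_{t=1}^T v_{t,i}$ and $v_{t,i}$ is the $i$-th coordinate of $\v_t$. By the i.i.d.\ assumption, for each fixed $i$ the variables $v_{1,i},\ldots,v_{T,i}$ are independent Rademacher, so $S_{T,i}$ has the law of a simple symmetric random walk after $T$ steps. Hence it suffices to show that for any such walk, $\EE|S_T| \gtrsim \sqrt{T}$; summing over $d$ coordinates then yields the claim.

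Second, to lower bound $\EE|S_T|$, the cleanest route is Khintchine's inequality with $p=1$ applied to the unit weight vector $(1,1,\ldots,1) \in \RR^T$: this gives $\EE|S_T| \geq A_1 \sqrt{T}$ for an absolute constant $A_1>0$ (in fact $A_1 = 1/\sqrt{2}$). If one prefers a self-contained argument, the Paley--Zygmund inequality works just as well: from $\EE[S_T^2]=T$ and the easily computed bound $\EE[S_T^4] \leq 3T^2$, one obtains $\Pr(|S_T|\geq \sqrt{T}/2) \geq c$ for an absolute constant $c>0$, whence $\EE|S_T| \geq (c/2)\sqrt{T}$. Either way, the constant does not depend on $T$ or $d$.

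Combining the two steps gives $\EE\|\sum_{t=1}^T \v_t\|_1 \gtrsim d\sqrt{T}$ as required. There is no real obstacle here: the only thing to be careful about is that the lower bound on $\EE|S_T|$ must be independent of $T$, which both Khintchine and Paley--Zygmund supply directly; triangle-inequality or Jensen-based approaches give upper bounds in the wrong direction, so those need to be avoided.
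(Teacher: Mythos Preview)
Your proposal is correct, and the first step (reducing to a single coordinate by linearity of expectation and the i.i.d.\ assumption) is exactly what the paper does as well. The divergence is only in the second step, where you bound $\EE|S_T|$.

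The paper takes a fully combinatorial route: it writes $\EE|S_T|=\tfrac{1}{2^T}\sum_{m=0}^T|2m-T|\binom{T}{m}$, then for even $T=2J$ manipulates the binomial sums via identities such as $k\binom{N}{k}=N\binom{N-1}{k-1}$ to obtain the closed form $\EE|S_T|=\tfrac{2J}{4^J}\binom{2J}{J}$, and finally applies the elementary estimate $\tfrac{1}{4^n}\binom{2n}{n}\ge\tfrac{1}{2\sqrt{n}}$ to get $\EE|S_T|\ge\sqrt{T/2}$; the odd case is handled by $|S_T|\ge|S_{T-1}|-1$. Your approach bypasses all of this by citing Khintchine's inequality (optimal constant $A_1=1/\sqrt{2}$), or alternatively the Paley--Zygmund argument from the second and fourth moments $\EE[S_T^2]=T$ and $\EE[S_T^4]\le 3T^2$.

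Both yield the same asymptotic constant, so nothing is lost. Your argument is considerably shorter and uniform in $T$ (the paper's odd-$T$ bound $\sqrt{T/2}-1$ is only nonnegative for $T\ge 2$), at the cost of invoking a named inequality rather than being entirely self-contained. The Paley--Zygmund variant you sketch is itself self-contained and still shorter than the paper's computation, so if self-containment matters that would be the cleanest option.
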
 

\begin{proof}
We consider the left hand side
\begin{align}
\label{equa_random_walk_sqrt_t}
\EE_{\{\v_t\}_{t=1}^T} \lrnorm{\sum_{t=1}^{T} \v_t }_1 = \EE_{\{\v_t\}_{t=1}^T} \sum_{i = 1}^d \bigg | \sum_{t=1}^{T} \v_t(i) \bigg | = d \cdot \EE_{\{\v_t(1)\}_{t=1}^T} \bigg | \sum_{t=1}^{T} \v_t(1) \bigg |, 
\end{align}
where $\v_t(i)$ denotes the $i$-th dimension of $\v_t$, and $\{\v_t(1)\}_{t=1}^T := \{\v_1(1), \v_2(1), ..., \v_T(1)\}$.
The second equality holds because that every dimension of $\v_t$ is independent to each other.

Consider the sequence $\{\v_{t}\}_{t=1}^T$. If the event: \textit{$+1$ is picked} happens $m$ times with the probability $P_m$, then the event : \textit{$-1$ is picked} happens $T-m$ times. Denote $S_T := \sum_{t=1}^{T} \v_t(1)$, and we have  
\begin{align}
\nonumber
S_T = m - (T-m)  = 2m-T.
\end{align} Denote $\Scal:=\{-T, -T+2, ..., T-2, T\}$, and $S_T \in \Scal$. Thus, we have
\begin{align}
\nonumber
P(S_T = 2m-T) = P_m = \frac{1}{2^T}\cdot \binom{T}{m},
\end{align} 
and
\begin{align}
\nonumber
\EE |S_T| = & \sum_{m=0}^T  \frac{\lvert 2m-T \rvert}{2^T} \cdot \binom{T}{m} = \frac{1}{2^T}\cdot \sum_{m=0}^T  \frac{\lvert 2m-T\rvert \cdot T!}{m! \cdot \lrincir{T-m}!}.
\end{align} When $T$ is even, denote $T = 2J$. Thus,
\begin{align}
\nonumber
& \EE |S_T| \\ \nonumber 
= & \frac{1}{2^{2J}}\cdot \sum_{m=0}^T \frac{\lvert 2m-2J\rvert \cdot (2J)!}{m! \cdot (2J-m)!} \\ \nonumber
= & \frac{(2J)!}{2^{2J}}\cdot \sum_{m=0}^{2J}  \frac{\lvert 2m-2J\rvert }{m! \cdot (2J-m)!} \\ \nonumber
\refabovecir{=}{\textcircled{1}} & \frac{(2J)!}{2^{2J-2}}\cdot \sum_{n=1}^J  \frac{n}{(J+n)! \cdot (J-n)!} \\ \nonumber
=& \frac{1}{2^{2J-2}}\cdot   \lrincir{ \sum_{n=0}^J (n+J)\binom{2J}{J+n} -  \sum_{n=0}^J J\binom{2J}{J+n} } \\ \nonumber 
=& \frac{1}{2^{2J-2}}\cdot   \lrincir{ \sum_{i=J}^{2J} i \binom{2J}{i} -  \sum_{i=J}^{2J} J\binom{2J}{i} } \\ \nonumber 
\refabovecir{=}{\textcircled{2}}& \frac{1}{2^{2J-2}}\cdot   \lrincir{ \sum_{i=J}^{2J} 2J \binom{2J-1}{i-1} -  \sum_{i=J}^{2J} J\binom{2J}{i} } \\ \nonumber 
\refabovecir{=}{\textcircled{3}}& \frac{2J}{2^{2J-2}}\cdot   \lrincir{ \sum_{k=J-1}^{2J-1} \binom{2J-1}{k} -  \frac{1}{4}\lrincir{2^{2J} + \binom{2J}{J}} } \\ \nonumber 
\refabovecir{=}{\textcircled{4}}& \frac{2J}{2^{2J-2}}\cdot   \lrincir{ \frac{1}{2}\lrincir{ 2^{2J-1} + \binom{2J-1}{J-1} } -  \frac{1}{4}\lrincir{2^{2J} + \binom{2J}{J}} } \\ \nonumber 
=& \frac{2J}{2^{2J-2}}\cdot   \lrincir{ \frac{1}{4} \binom{2J}{J} } \\ \nonumber
= & \frac{2J}{ 4^J}\cdot \frac{(2J)!}{ J! \cdot J!} \\ \nonumber
\refabovecir{\ge}{\textcircled{5}} & 2J \cdot \frac{1}{2\sqrt{J}} \\ \nonumber
= & \sqrt{\frac{T}{2}}.
\end{align}   Here, $\textcircled{1}$ holds due to
\begin{align}
\nonumber
& \frac{(2J)!}{2^{2J}}\cdot \sum_{m=0}^{2J}  \frac{\lvert 2m-2J\rvert }{m! \cdot (2J-m)!} \\ \nonumber
=&  \frac{(2J)!}{2^{2J}}\cdot \lrincir{\sum_{m=0}^J  \frac{2J-2m}{m! \cdot (2J-m)!} +  \sum_{m=J+1}^{2J}  \frac{2m-2J}{m! \cdot (2J-m)!}} \\ \nonumber
= & \frac{(2J)!}{2^{2J}}\cdot \sum_{n_1=0}^J  \frac{2n_1}{(J-n_1)! \cdot (J+n_1)!} +  \frac{(2J)!}{2^{2J}} \cdot \sum_{n_2=1}^{J}  \frac{2n_2}{(J+n_2)! \cdot (J-n_2)!} \\ \nonumber
= & \frac{(2J)!}{2^{2J}}\cdot \sum_{n_1=0}^J  \frac{2n_1}{(J-n_1)! \cdot (J+n_1)!} +  \frac{(2J)!}{2^{2J}} \cdot \sum_{n_2=0}^{J}  \frac{2n_2}{(J+n_2)! \cdot (J-n_2)!} \\ \nonumber
= & \frac{(2J)!}{2^{2J-2}}\cdot \sum_{n=0}^J  \frac{n}{(J-n)! \cdot (J+n)!}.
\end{align} $\textcircled{2} $ holds because that, for any $0\le k\le N$,
\begin{align}
\nonumber
k\binom{N}{k} = N\binom{N-1}{k-1}
\end{align} $\textcircled{3} $  holds because that
\begin{align}
\nonumber
2^{2J}  = & \sum_{i=0}^J \binom{2J}{i} + \sum_{i=J}^{2J} \binom{2J}{i} - \binom{2J}{J} = 2\sum_{i=0}^J \binom{2J}{i} - \binom{2J}{J}.
\end{align} $\textcircled{4} $  holds because that
\begin{align}
\nonumber
2^{2J-1}  = & \sum_{i=0}^{J-1} \binom{2J-1}{i} + \sum_{i=J-1}^{2J-1} \binom{2J-1}{i} - \binom{2J-1}{J-1} = 2\sum_{i=0}^J \binom{2J-1}{i} - \binom{2J-1}{J-1}.
\end{align} $\textcircled{5} $ holds because that, for any $n > 1$,
\begin{align}
\nonumber
\frac{1}{4^n}\binom{2n}{n} \ge \frac{1}{2\sqrt{n}}.
\end{align} 

When $T$ is odd, we have 
\begin{align}
\nonumber
\EE |S_T| = & \EE |S_{T-1}+\v_{T}(1)| \\ \nonumber 
\ge & \EE |S_{T-1}| - \EE |\v_{T}(1)| \\ \nonumber 
= & \EE |S_{T-1}| - 1 \\ \nonumber 
= &\sqrt{\frac{T}{2}}-1.
\end{align} Finally, we obtain
\begin{align}
\nonumber
\EE_{\{\v_t\}_{t=1}^T} \lrnorm{\sum_{t=1}^{T} \v_t }_1 \gtrsim d\sqrt{T}.
\end{align}
It completes the proof.
\end{proof}

\balance

\noindent \textbf{Proof to Theorem \ref{theorem_lower_bound}:}
\begin{proof}
Let $f_t(\x_t) = F_t(\x_t) + H(\x_t)$, where $ F_t(\x_t) := \langle \v_t, \x_t \rangle$ and $H(\x_t) = 0$ for all $\x_t \in \Xcal$. Here, $\v_t \in \{+1,-1\}^d$ is a random vector with i.i.d. elements sampled from Rademacher distribution. $\mathcal{X} = \left\{ \x \in \mathbb{R}^d : \|\x \|_2 \leq 1 \right \}$, and $\mathcal{L}_{D_{\beta}}^T = \{\{\y_t\}_{t=1}^T  : \sum_{t=1}^{T-1} t^{\beta}\cdot \|\y_{t+1}-\y_t\|_2 \leq D_{\beta} \}$. Under this construction, for any given algorithm $A\in \mathcal{A}$, we have

\begin{align}
& \sup_{\{f_t\}_{t=1}^T} \Rcal_T^A \\ \nonumber 
=& \sup_{\{\v_t\}_{t=1}^T} \Rcal_T^A \geq  \EE_{\{\v_t\}_{t=1}^T} \Rcal_T^A
\\ \nonumber
=&\EE_{\{\v_t\}_{t=1}^T} \sum_{t=1}^T f_t(\x_t) - \EE_{\{\v_t\}_{t=1}^T} \lrincir{\min_{\{\y_t\}_{t=1}^T \in \mathcal{L}_{D_{\beta}}^T} \sum_{t=1}^T f_t(\y_t) }
\\ \nonumber
=&\EE_{\{\v_t\}_{t=1}^T} \sum_{t=1}^T \langle \v_t, \x_t \rangle - \EE_{\{\v_t\}_{t=1}^T} \lrincir{ \min_{\{\y_t\}_{t=1}^T \in \mathcal{L}_{D_{\beta}}^T} \sum_{t=1}^T \langle \v_t, \y_t \rangle }
\\ \nonumber
=& 0 - \EE_{\{\v_t\}_{t=1}^T} \lrincir{ \min_{\{\y_t\}_{t=1}^T \in \mathcal{L}_{D_{\beta}}^T} \sum_{t=1}^T \langle \v_t, \y_t \rangle }
\\ \nonumber
=& \EE_{\{\v_t\}_{t=1}^T} \lrincir{ \max_{\{\y_t\}_{t=1}^T \in \mathcal{L}_{D_{\beta}}^T} \sum_{t=1}^T \langle -\v_t, \y_t \rangle }
\\ \nonumber
\refabovecir{=}{\textcircled{1} }& \EE_{\{\v_t\}_{t=1}^T} \lrincir{ \max_{\{\y_t\}_{t=1}^T \in \mathcal{L}_{D_{\beta}}^T} \sum_{t=1}^T \langle \v_t, \y_t \rangle}
\end{align}
$\textcircled{1}$ holds since Rademacher distribution is a symmetric distribution.

Next, we try to estimate the lower bound of $\EE_{\{\v_t\}_{t=1}^T} \max_{\{\y_t\}_{t=1}^T \in \mathcal{L}_{D_{\beta}}^T} \sum_{t=1}^T \langle \v_t, \y_t \rangle$. 

One feasible solution of $\y_t$ is constructed as follows,
\begin{enumerate}
    \item Evenly split the sequence $\{\y_t\}_{t=1}^T$ into two sub-sequences: $\{\bar{\y}_t\}_{t=1}^{T_1} := \{\y_t\}_{t=1}^{\frac{T}{2}-1}$ and $\{\hat{\y}_t\}_{t=1}^{T_2} := \{\y_t\}_{t=\frac{T}{2}}^{T}$, where $T_1 = T_2 = \frac{T}{2}$. $\{\v_t\}_{t=1}^T$ is also split into $\{\bar{\v}_t\}_{t=1}^{T_1}$ and $\{\hat{\v}_t\}_{t=1}^{T_2}$.
    \item Let all the $\lrnorm{\y_t}_2 \leq \frac{1}{2}$,
    \item Evenly split $\{\bar{\y}_t\}_{t=1}^{T_1}$ into $N := \left \lceil \frac{D_{\beta}}{T_1^\beta} \right \rceil $ subsets $\{\y_t\}_{t=1}^{\frac{T_1}{N}}$, $\{\y_t\}_{t=\frac{T_1}{N} + 1}^{\frac{2T_1}{N}}$, $\{\y_t\}_{t=\frac{2T_1}{N} + 1}^{\frac{3T_1}{N}}$, ..., $\{\y_t\}_{t=\frac{(N-1)T_1}{N}+1}^{T_1}$.
    \item For the first sub-sequence $\{\bar{\y}_t\}_{t=1}^{T_1}$, within $i$-th subset, let the values in it be same, and denote it by $\u_i$. For the second sub-sequence $\{\hat{\y}_t\}_{t=1}^{T_2}$, let all values be $\u_N$. 
    \item Since elements in the second sub-sequence $\{\hat{\y}_t\}_{t=1}^{T_2}$ have the same value $\u$, the difference between two elements is $0$. Additionally, consider the first sub-sequence $\{\bar{\y}_t\}_{t=1}^{T_1}$. Elements in different subsets can be different such that $\|\u_{i+1} - \u_i\| \leq \|\u_{i+1}\| + \|\u_i\| \leq 1$. We have 
    \begin{align}
    \nonumber
    & \sum_{t=1}^{T-1} t^{\beta}\cdot\|\y_{t+1} - \y_{t}\| = \sum_{t=1}^{T_1-1} t^{\beta}\cdot\|\y_{t+1} - \y_{t}\| + 0 \\ \nonumber 
    =& \sum_{i=1}^{N-1} \|\u_{i+1} - \u_i\| \cdot \lrincir{\frac{T_1}{N}\cdot i}^\beta  \\ \nonumber
    \le & T_1^{\beta}\sum_{i=1}^{N-1} \lrincir{\frac{i}{N}}^{\beta} \\ \nonumber
    \le & T_1^{\beta}(N-1) \\ \nonumber
    \le & D_{\beta}.
    \end{align} It implies $\{\bar{\y}_t\}_{t=1}^{T_1}$ and $\{\hat{\y}_t\}_{t=1}^{T_2}$ under our construction are feasible.
\end{enumerate}
Based on the above steps, we have
\begin{align}
&\EE_{\{\v_t\}_{t=1}^T} \lrincir{ \max_{\{\y_t\}_{t=1}^T \in \mathcal{L}_{D_{\beta}}^T} \sum_{t=1}^T \langle \v_t, \y_t \rangle} 
\\ \nonumber
= &\EE_{\{\bar{\v}_t\}_{t=1}^{T_1}} \max_{\{\u_i\}_{i=1}^N} \sum_{i=1}^N \lrangle{\sum_{t=\frac{iT_1}{N}+1}^{\frac{(i+1)T_1}{N}}\bar{\v}_t, \u_i } + \EE_{\{\hat{\v}_t\}_{t=1}^{T_2}} \max_{\u}  \sum_{t=1}^{T_2} \langle \hat{\v}_t, \u \rangle \\ \nonumber 
= &\frac{1}{2}\EE_{\{\bar{\v}_t\}_{t=1}^{T_1}} \max_{\{\z_i\}_{i=1}^N \in \Xcal^N} \sum_{i=1}^N \lrangle{\sum_{t=\frac{iT_1}{N}+1}^{\frac{(i+1)T_1}{N}}\bar{\v}_t, \z_i } + \frac{1}{2}\EE_{\{\hat{\v}_t\}_{t=1}^{T_2}} \max_{\z \in \Xcal}  \sum_{t=1}^{T_2} \langle \hat{\v}_t, \z \rangle 
\\ \nonumber
\refabovecir{=}{\textcircled{1}} & \frac{1}{2} N \cdot \EE_{\{\bar{\v}_t\}_{t=1}^{T_1}}  \lrnorm{ \sum_{t=\frac{iT_1}{N}+1}^{\frac{(i+1)T_1}{N}}\bar{\v}_t} + \frac{1}{2}\EE_{\{\hat{\v}_t\}_{t=1}^{T_2}} \lrnorm{\sum_{t=1}^{T_2}\hat{\v}_t}
\\ \nonumber
\refabovecir{\geq}{\textcircled{2}}  &   \frac{1}{2} N \frac{1}{\sqrt{d}}   \EE_{\{\bar{\v}_t\}_{t=1}^{T_1}} \lrnorm{ \sum_{t=\frac{iT_1}{N}+1}^{\frac{(i+1)T_1}{N}}\v_t}_1 + \frac{1}{2\sqrt{d}} \EE_{\{\hat{\v}_t\}_{t=1}^{T_2}} \lrnorm{\sum_{t=1}^{T_2}\hat{\v}_t}_1 
\\ \nonumber
\refabovecir{\gtrsim }{\textcircled{3}}  & \frac{\sqrt{d}}{2} \cdot \sqrt{T_1N} + \frac{\sqrt{d}}{2} \cdot \sqrt{T_2} 
\\  \label{equa_lower_bound_omega}
\gtrsim & \sqrt{D_{\beta}\cdot T^{1-\beta}} + \sqrt{T}.
\end{align} Recall $\mathcal{X} = \{\x \in \mathbb{R}^d : \|\x \|_2 \leq 1 \}$ in this example. $\textcircled{1}$ holds due to the definition of the dual norm of $\ell_2$ norm, specifically, which is $\lrnorm{\x}_\ast = \lrnorm{\x}_2 = \max_{\lrnorm{\y}\le 1} \lrangle{\x, \y}.$ since the dual norm of $\ell_2$ norm is still $\ell_2$ norm. $\textcircled{2}$ holds due to $\|\x\|_1 \leq \sqrt{d} \|\x\|$. $\textcircled{3}$ holds due to Lemma~\ref{lemma_rademacher_expectation}.

Since \eqref{equa_lower_bound_omega} holds for any algorithm $A \in \Acal$, we thus obtain
\[
    \inf_{A \in \mathcal{A}} \sup_{\{f_t\}_{t=1}^T \in \Fcal^T} \Rcal_T^A = \Omega\lrincir{\sqrt{D_{\beta} \cdot T^{1-\beta}} + \sqrt{T}}.
\]
It completes the proof.
\end{proof}

\noindent \textbf{Proof to Theorem \ref{theorem_our_upper_bound}:}
\begin{proof}
For any sequence of $T$ loss functions $\{f_t\}_{t=1}^T \in \Fcal^T$, we have 
\begin{align}
\nonumber
& \sum_{t=1}^T \lrincir{F_t(\x_t) + H(\x_t) - F_t(\y_t) - H(\y_t) } \\ \nonumber 
= & \underbrace{\sum_{t=1}^T \lrincir{F_t(\x_t) + H(\x_{t+1}) - F_t(\y_t) - H(\y_t)}}_{I_0}  + H(\x_{1}) - H(\x_{T+1}).
\end{align} 

According to Lemma \ref{lemma_1}, we have
\begin{align}
\nonumber
I_0 = & \sum_{t=1}^T \lrincir{F_t(\x_t) + H(\x_{t+1}) - F_t(\y_t) - H(\y_t) } \\ \nonumber 
\le & \sum_{t=1}^T \frac{1}{2\eta_t} \lrincir{ \lrnorm{\y_t- \x_t}_2^2 - \lrnorm{\y_t - \x_{t+1}}_2^2} + \frac{1}{2} \sum_{t=1}^T \eta_t \lrnorm{G_t(\x_t)}^2 \\ \nonumber
\refabovecir{\le}{\textcircled{1}} & \sqrt{R}  \sum_{t=1}^{T-1}\frac{1}{\eta_t}\lrincir{\lrnorm{\y_{t+1} - \y_t}}  + \frac{R}{2\eta_T} + \frac{G}{2} \sum_{t=1}^T \eta_t \\ \nonumber
\le & \sqrt{R} \max_{\eta_{t=1}^T} \left \{\frac{1}{\eta_t \cdot t^{\beta}} \right\} \cdot D_{\beta}  + \frac{R}{2\eta_T} + \frac{G}{2} \sum_{t=1}^T \eta_t.
\end{align} $\textcircled{1}$ holds due to Lemma \ref{lemma_recurrsive_bound}. Thus, we have
\begin{align}
\label{equa_theorem1_R_temp1} 
 \sum_{t=1}^T \lrincir{F_t(\x_t) + H(\x_t) - F_t(\y_t) - H(\y_t) } \le \sqrt{R} \max_{\eta_{t=1}^T} \left \{\frac{1}{\eta_t \cdot t^{\beta}} \right\} \cdot D_{\beta}  + \frac{R}{2\eta_T} + \frac{G}{2} \sum_{t=1}^T \eta_t + H(\x_1)- H(\x_{T+1}).
\end{align} Since \eqref{equa_theorem1_R_temp1} holds for any sequence of loss functions $\{f_t\}_{t=1}^T \in \Fcal^T$, thus, 
\begin{align}
\nonumber
\sup_{\{f_t\}_{t=1}^T \in \Fcal^T}\Rcal_T^{\textsc{POG}} \le & \sqrt{R} \max_{\{\eta_t\}_{t=1}^T} \left \{\frac{1}{\eta_t \cdot t^{\beta}} \right\} \cdot D_{\beta}  + \frac{R}{2\eta_T} + \frac{G}{2} \sum_{t=1}^T \eta_t + H(\x_1) - H(\x_{T+1})
\end{align}
It completes the proof.
\end{proof}

\noindent \textbf{Proof to Corollary \ref{corollary_unify_upper_bound_our}}
\begin{proof}
Assume $\eta_t := t^{-\gamma}\cdot \sigma_1$, where $\sigma_1$ is a constant, and does not depend on $t$. According to Theorem \ref{theorem_our_upper_bound}, when $\gamma \ge \beta$, 
\begin{align}
\nonumber
\max_{\eta_{t=1}^T} \left \{\frac{1}{\eta_t \cdot t^{\beta}} \right\} = \frac{T^{\gamma - \beta}}{\sigma_1}.
\end{align} Substituting it into \eqref{equa_theorem_our_upper_bound}, we have
\begin{align}
\nonumber
& \Rcal_T^{\textsc{POG}} \\ \nonumber 
\le & \frac{\sqrt{R} D_{\beta}}{\sigma_1} T^{\gamma -\beta}  + \frac{R}{2\sigma_1}T^{\gamma} + \frac{G\sigma_1}{2} \sum_{t=1}^{T} t^{-\gamma} + H(\x_1) - H(\x_{T+1}) \\ \nonumber
\refabovecir{\le}{\textcircled{1}} & \frac{\sqrt{R} D_{\beta}}{\sigma_1} T^{\gamma -\beta}  + \frac{R}{2\sigma_1}T^{\gamma} + \frac{G\sigma_1}{2(1-\gamma)}T^{1-\gamma} + H(\x_1) - H(\x_{T+1}).
\end{align} $\textcircled{1}$ holds due to $0\le \gamma < 1$, and Lemma \ref{lemma_seriez_upper_bound_temp}.

Choosing the optimal $\sigma_1$ with 
\begin{align}
\nonumber
\sigma_1 = \sqrt{\frac{(1-\gamma)\lrincir{2\sqrt{R}T^{2\gamma - \beta-1}D_{\beta} + RT^{2\gamma-1}}}{G}},
\end{align} we have
\begin{align}
\nonumber
\Rcal_T^{\textsc{POG}} \le & \sqrt{\frac{2G\sqrt{R}D_{\beta} T^{1-\beta}}{1-\gamma}} + \sqrt{\frac{GRT}{4(1-\gamma)}} + H(\x_1) - H(\x_{T+1}) \\ \nonumber 
\lesssim & \sqrt{D_{\beta} \cdot T^{1-\beta}} + \sqrt{T}.
\end{align} 
It completes the proof.

\end{proof}

\begin{Lemma}
\label{lemma_connect_dynamic_regret_shifting_regret_equivalent}
The optimal reference points $\{\y_t\}_{t=1}^T$  satisfying $\sum_{t=1}^{T-1} \mathbbm{1}\{\y_{t+1} \neq \y_t\} \le  M$ still satisfy $\sum_{t=1}^{T-1}\lrnorm{\y_{t+1} - \y_t} \le M \sqrt{R}$.
\end{Lemma}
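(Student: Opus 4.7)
The plan is to exploit the trivial observation that the summand $\lrnorm{\y_{t+1} - \y_t}$ contributes to the sum only when $\y_{t+1} \neq \y_t$, so the shifting count $M$ already controls the number of nonzero terms, and the diameter bound from Assumption~\ref{assumption_bound_gradient_domain} controls the magnitude of each such term.

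Concretely, I would first write the identity
\begin{align*}
\lrnorm{\y_{t+1} - \y_t} = \lrnorm{\y_{t+1} - \y_t} \cdot \mathbbm{1}\{\y_{t+1} \neq \y_t\},
\end{align*}
which is valid for every $t$ because the right-hand side agrees with the left-hand side on both cases (zero when $\y_{t+1} = \y_t$, and equal to the norm otherwise). Next, I would invoke Assumption~\ref{assumption_bound_gradient_domain}, which states that $\lrnorm{\x - \y}^2 \le R$ for any $\x,\y \in \Xcal$; taking square roots gives $\lrnorm{\y_{t+1} - \y_t} \le \sqrt{R}$ pointwise, since the reference sequence lies in $\Xcal$.

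Combining these two steps yields the termwise bound
\begin{align*}
\lrnorm{\y_{t+1} - \y_t} \le \sqrt{R}\cdot \mathbbm{1}\{\y_{t+1} \neq \y_t\},
\end{align*}
and summing over $t = 1, \ldots, T-1$ together with the assumed shifting constraint $\sum_{t=1}^{T-1} \mathbbm{1}\{\y_{t+1} \neq \y_t\} \le M$ delivers the desired conclusion $\sum_{t=1}^{T-1} \lrnorm{\y_{t+1} - \y_t} \le M\sqrt{R}$.

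There is no real obstacle here; this is essentially a one-line bookkeeping lemma whose role is to bridge the shifting-regret formulation used in \eqref{equa_definition_shift_regret} with the dynamic-regret formulation used in \eqref{eq:D0}, so that the POG upper bound proved for the latter can be transferred to the former. The only subtlety worth mentioning is that the inequality is sharp when the reference sequence jumps between a pair of diametrically opposite points of $\Xcal$ at each of the $M$ switch times, which confirms that the factor $\sqrt{R}$ cannot in general be improved without extra structural assumptions on $\Xcal$.
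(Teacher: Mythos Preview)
Your proof is correct and follows essentially the same approach as the paper: the paper phrases the argument in terms of the $\ell_0$ and $\ell_1$ norms of the vector $(a_t)_{t=1}^{T-1}$ with $a_t := \lrnorm{\y_{t+1}-\y_t}$, but the underlying reasoning---at most $M$ nonzero terms, each bounded by $\sqrt{R}$ from Assumption~\ref{assumption_bound_gradient_domain}---is identical to yours.
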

\begin{proof}
Denote $a_t = \lrnorm{\y_{t+1} - \y_t}$, and $\a_{T} = \left \{\a_t | t \in [T-1] \right \} \in \RR^{T-1}$.  Note that $\sum_{t=1}^{T-1} \mathbbm{1}\{\y_{t+1} \neq \y_t\} = \lrnorm{\a_T}_0$. Thus, for $M $-shifting regret, $\sum_{t=1}^{T-1} \mathbbm{1}\{\y_{t+1} \neq \y_t\} = \lrnorm{\a_T}_0 \le  M$. When $\beta = 0$, we have 
\begin{align}
\nonumber
\sum_{t=1}^{T-1}\lrnorm{\y_{t+1} - \y_t} = \lrnorm{\a_T}_1 \le \lrnorm{\a_T}_0 \sqrt{R} \le M \sqrt{R}.
\end{align} The first inequality holds because, for any $1\le t \le T$, $\lrnorm{\y_{t+1} - \y_t} \le \sqrt{R}$. It completes the proof.
\end{proof}

\textbf{Proof to Corollary \ref{corollary_connect_shift_regret}:}

\begin{proof}
Replacing $D_{0}$ by $M \sqrt{R}$ in Corollary \ref{corollary_unify_upper_bound_our}, we have 
\begin{align}
\nonumber
\sup_{\{f_t\}_{t=1}^T\in\Fcal^T} \Rcal_T^{\textsc{POG}} \lesssim \sqrt{MT} + \sqrt{T}.
\end{align} According to Lemma \ref{lemma_connect_dynamic_regret_shifting_regret_equivalent}, we obtain $\sup_{f_{t=1}^T\in\Fcal^T} \widetilde{\Rcal}_T^{\textsc{POG}} \le \sup_{\{f_t\}_{t=1}^T\in\Fcal^T} \Rcal_T^{\textsc{POG}} \lesssim \sqrt{MT} + \sqrt{T}$. It thus completes the proof.
\end{proof}

\begin{Lemma}
\label{lemma_1}
Given any sequence $\{\y_t\}_{t=1}^T \in \mathcal{L}_{D_{\beta}}^T$, and setting any $\eta_t>0$ in Algorithm \ref{algo_pog}, we have 
\begin{align}
\nonumber
& \sum_{t=1}^T \lrincir{F_t(\x_t) + H(\x_{t+1}) - F_t(\y_t) - H(\y_t) } \\ \nonumber 
\le & \sum_{t=1}^T \frac{1}{2\eta_t} \lrincir{ \lrnorm{\y_t - \x_t}_2^2 - \lrnorm{\y_t - \x_{t+1}}_2^2} +  \sum_{t=1}^T \frac{\eta_t}{2} \lrnorm{G_t(\x_t)}^2.
\end{align}
\end{Lemma}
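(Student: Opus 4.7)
The plan is to prove the bound term-by-term: I will fix $t$, bound the single-round quantity $F_t(\x_t) + H(\x_{t+1}) - F_t(\y_t) - H(\y_t)$ by $\frac{1}{2\eta_t}(\|\y_t-\x_t\|^2 - \|\y_t-\x_{t+1}\|^2) + \frac{\eta_t}{2}\|G_t(\x_t)\|^2$, and then sum over $t$. The two ingredients I expect to combine are (i) convexity of $F_t$ and $H$ and (ii) the first-order optimality condition of the proximal update. The subtlety lies in evaluating the subgradient of $H$ at $\x_{t+1}$ (not at $\x_t$), which is exactly what the proximal step provides.

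First I would write convexity of $F_t$ as $F_t(\x_t) - F_t(\y_t) \le \langle G_t(\x_t), \x_t - \y_t \rangle$. Next, I would extract the optimality condition from the update $\x_{t+1} = \arg\min_{\x \in \Xcal}\{\langle G_t(\x_t), \x \rangle + \frac{1}{2\eta_t}\|\x-\x_t\|^2 + H(\x)\}$: there exists $h \in \partial H(\x_{t+1})$ (absorbing the indicator of $\Xcal$ into $H$ if necessary, which preserves convexity and closedness) such that $h = -G_t(\x_t) - \tfrac{1}{\eta_t}(\x_{t+1}-\x_t)$. Convexity of $H$ then yields $H(\x_{t+1}) - H(\y_t) \le \langle h, \x_{t+1} - \y_t\rangle$, and substituting the explicit form of $h$ makes the $\langle G_t(\x_t), \y_t\rangle$ cross-terms cancel against those coming from the $F_t$ bound.

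After cancellation the remaining expression is $\langle G_t(\x_t), \x_t - \x_{t+1} \rangle + \tfrac{1}{\eta_t}\langle \x_t - \x_{t+1}, \x_{t+1} - \y_t\rangle$. To the first inner product I would apply Young's inequality, giving $\tfrac{\eta_t}{2}\|G_t(\x_t)\|^2 + \tfrac{1}{2\eta_t}\|\x_t-\x_{t+1}\|^2$. To the second I would apply the three-point identity $2\langle a-b, b-c\rangle = \|a-c\|^2 - \|a-b\|^2 - \|b-c\|^2$ with $a=\x_t$, $b=\x_{t+1}$, $c=\y_t$, which produces $\tfrac{1}{2\eta_t}(\|\x_t-\y_t\|^2 - \|\x_t-\x_{t+1}\|^2 - \|\x_{t+1}-\y_t\|^2)$. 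The $\|\x_t-\x_{t+1}\|^2$ terms then cancel exactly, leaving precisely the per-round bound. Summing over $t=1,\ldots,T$ concludes the proof.

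The main potential obstacle is handling the domain constraint $\Xcal$ cleanly alongside the nonsmooth $H$, since the first-order condition must then include the normal cone of $\Xcal$. My preferred remedy is to fold the indicator $\iota_\Xcal$ into $H$ at the start, so that a single subgradient $h \in \partial(H+\iota_\Xcal)(\x_{t+1})$ encodes both terms; since $\y_t \in \Xcal$, $\iota_\Xcal(\y_t)=0$ and the convexity bound for $H$ at $\y_t$ is unaffected. Beyond this bookkeeping the argument is a standard proximal/mirror-descent one-step inequality.
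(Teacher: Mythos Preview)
Your proposal is correct and follows essentially the same route as the paper's proof: convexity of $F_t$ and $H$, the first-order optimality condition of the proximal step, Young's inequality on $\langle G_t(\x_t),\x_t-\x_{t+1}\rangle$, and a three-point identity to produce the telescoping norm difference. The only cosmetic differences are that the paper phrases the optimality condition as a variational inequality over $\Xcal$ (absorbing the normal cone that way) and writes the three-point identity in Bregman-divergence form for $\psi(\x)=\tfrac12\|\x\|^2$, whereas you fold $\iota_{\Xcal}$ into $H$ and use the Euclidean polarization identity directly; both treatments are equivalent.
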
 
\begin{proof}
Define  $\psi(\x) := \frac{1}{2}\lrnorm{\x}^2$, and $\x_{t+1} = \argmin_{\x\in\Xcal} \lrangle{G_t(\x_t), \x} + \frac{1}{\eta_t} B_\psi(\x,\x_t) + H(\x)$, according to the optimal condition, for any $\x\in\Xcal$, we have 
\begin{align}
\label{equa_optimal_condition_composite}
0\le & \lrangle{ \x - \x_{t+1}, \eta_t G_t(\x_t)} + \lrangle{\x - \x_{t+1}, \nabla \psi(\x_{t+1}) - \nabla \psi(\x_t) + \eta_t \partial H(\x_{t+1}) } .
\end{align} Then, we have
\begin{align}
\nonumber
&\eta_t\lrincir{F_t(\x_t) + H(\x_{t+1}) - F_t(\y_t) - H(\y_t)} \\ \nonumber
\le & \eta_t \lrangle{\x_t - \y_t, G_t(\x_t)} + \eta_t \lrangle{\x_{t+1}-\y_t, \partial H(\x_{t+1})} \\ \nonumber
= & \eta_t \lrangle{\x_{t+1} - \y_t, G_t(\x_t)} + \eta_t \lrangle{\x_{t+1}-\y_t, \partial H(\x_{t+1})}  + \eta_t \lrangle{\x_t - \x_{t+1}, G_t(\x_t)} \\ \nonumber
\refabovecir{\le}{\textcircled{1}} & \lrangle{\y_t - \x_{t+1}, \nabla \psi(\x_{t+1}) - \nabla \psi(\x_t)} + \eta_t \lrangle{\x_t - \x_{t+1}, G_t(\x_t)} \\ \nonumber
\refabovecir{=}{\textcircled{2}} & B_{\psi}(\y_t, \x_t) - B_{\psi}(\x_{t+1}, \x_t) - B_{\psi}(\y_t, \x_{t+1}) + \eta_t \lrangle{\x_t - \x_{t+1}, G_t(\x_t)} \\ \nonumber
\refabovecir{\le}{\textcircled{3}} & B_{\psi}(\y_t, \x_t) - B_{\psi}(\y_t, \x_{t+1}) + \frac{\eta_t^2}{2}\lrnorm{G_t(\x_t)}^2.
\end{align} $\textcircled{1}$ holds due to \eqref{equa_optimal_condition_composite}. $\textcircled{2}$ holds due to three-point identity for Bregman divergence, which is, for any vectors $\x$, $\y$, and $\z$, 
\small\begin{align*}
B_{\psi}(\x, \y) =  B_{\psi}(\x, \z) + B_{\psi}(\z, \y) -\lrangle{\x-\z, \nabla \psi(\y) - \nabla \psi(\z)}.
\end{align*} \normalsize
$\textcircled{3}$ holds due to $\psi(\x) = \frac{1}{2}\lrnorm{\x}_2^2$, so that $B_{\psi}(\x_{t+1}, \x_t) = \frac{1}{2}\lrnorm{\x_{t+1} - \x_t}_2^2$. Thus, we finally obtain
\begin{align}
\nonumber
& \sum_{t=1}^T \lrincir{F_t(\x_t) + H(\x_{t+1}) - F_t(\y_t) - H(\y_t) } \\ \nonumber 
\le & \sum_{t=1}^T \frac{B_{\psi}(\y_t, \x_t) - B_{\psi}(\y_t, \x_{t+1}) }{\eta_t}  + \frac{1}{2} \sum_{t=1}^T \eta_t \lrnorm{G_t(\x_t)}^2 \\ \nonumber
= & \sum_{t=1}^T \frac{\lrnorm{\y_t - \x_t}_2^2 - \lrnorm{\y_t - \x_{t+1}}_2^2}{2\eta_t}  +  \sum_{t=1}^T \frac{\eta_t}{2} \lrnorm{G_t(\x_t)}^2.
\end{align} It completes the proof.
\end{proof}

\begin{Lemma}
\label{lemma_recurrsive_bound} 

Given any sequence $\{\y_t\}_{t=1}^T \in \mathcal{L}_{D_{\beta}}^T$, and setting a non-increasing series $0<\eta_{t+1} \le \eta_t$ in Algorithm \ref{algo_pog}, we have  
\begin{align}
\nonumber
\sum\limits_{t=1}^{T} \frac{1}{\eta_t} \lrincir{  -  \lrnorm{\y_t - \x_{t+1}}^2   + \lrnorm { \y_t- \x_t }^2 } \le 2\sqrt{R}  \sum_{t=1}^{T-1}\frac{1}{\eta_t}\lrincir{\lrnorm{\y_{t+1} - \y_t}}  + \frac{R}{\eta_T}.
\end{align} 
\end{Lemma}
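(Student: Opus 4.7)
\textbf{Proof proposal for Lemma \ref{lemma_recurrsive_bound}.}

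My plan is to perform an index shift to make the sum nearly telescoping, and then deal with the leftover ``cross terms'' between $\y_t$ and $\y_{t+1}$ by the elementary identity $\|u\|^2 - \|v\|^2 = \langle u-v, u+v\rangle$. Concretely, I would write $a_t := \lrnorm{\y_t - \x_t}^2$ and $b_t := \lrnorm{\y_t - \x_{t+1}}^2$, so the left-hand side is $\sum_{t=1}^T \eta_t^{-1}(a_t - b_t)$. Splitting off the boundary contributions,
\begin{align*}
\sum_{t=1}^T \frac{a_t - b_t}{\eta_t} \;=\; \frac{a_1}{\eta_1} - \frac{b_T}{\eta_T} + \sum_{t=1}^{T-1}\left( \frac{a_{t+1}}{\eta_{t+1}} - \frac{b_t}{\eta_t}\right).
\end{align*}

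The key algebraic step is to re-express each summand as
\begin{align*}
\frac{a_{t+1}}{\eta_{t+1}} - \frac{b_t}{\eta_t} \;=\; \frac{a_{t+1} - b_t}{\eta_t} + a_{t+1}\left(\frac{1}{\eta_{t+1}} - \frac{1}{\eta_t}\right),
\end{align*}
so that the monotonicity of $\{\eta_t\}$ together with $a_{t+1} \le R$ makes the second piece non-negative and bounded above by $R(\eta_{t+1}^{-1} - \eta_t^{-1})$, which telescopes to $R/\eta_T - R/\eta_1$. For the first piece, I will use the identity
\begin{align*}
a_{t+1} - b_t \;=\; \lrangle{\y_{t+1} - \y_t,\; \y_{t+1} + \y_t - 2\x_{t+1}} \;\le\; \lrnorm{\y_{t+1} - \y_t}\cdot\bigl(\lrnorm{\y_{t+1} - \x_{t+1}} + \lrnorm{\y_t - \x_{t+1}}\bigr),
\end{align*}
and bound each of the two parenthesized norms by $\sqrt{R}$ via Assumption~\ref{assumption_bound_gradient_domain}, yielding $a_{t+1} - b_t \le 2\sqrt{R}\lrnorm{\y_{t+1} - \y_t}$.

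Assembling these pieces, the boundary term $a_1/\eta_1 \le R/\eta_1$ cancels the $-R/\eta_1$ from the telescoping correction, the term $-b_T/\eta_T \le 0$ is dropped, and what remains is exactly $2\sqrt{R}\sum_{t=1}^{T-1} \eta_t^{-1}\lrnorm{\y_{t+1} - \y_t} + R/\eta_T$, which is the claimed bound. The main subtlety I expect is picking the ``right'' decomposition in the second displayed equation: splitting as $(a_{t+1}-b_t)/\eta_t + a_{t+1}(\eta_{t+1}^{-1} - \eta_t^{-1})$ rather than $(a_{t+1}-b_t)/\eta_{t+1} + b_t(\eta_{t+1}^{-1} - \eta_t^{-1})$ is essential because the target bound carries $1/\eta_t$ (not $1/\eta_{t+1}$) in front of $\lrnorm{\y_{t+1}-\y_t}$; the other decomposition would leave $1/\eta_{t+1}$ and would not recover the stated inequality given only $\eta_{t+1}\le\eta_t$.
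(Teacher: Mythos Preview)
Your proposal is correct and follows essentially the same route as the paper: an index shift to isolate boundary terms, the decomposition $\frac{a_{t+1}}{\eta_{t+1}} - \frac{b_t}{\eta_t} = \frac{a_{t+1}-b_t}{\eta_t} + a_{t+1}\bigl(\eta_{t+1}^{-1}-\eta_t^{-1}\bigr)$, telescoping the correction via $a_{t+1}\le R$, and bounding $a_{t+1}-b_t \le 2\sqrt{R}\,\lrnorm{\y_{t+1}-\y_t}$. The only cosmetic difference is that the paper obtains the last bound from the law-of-cosines expansion $-\lrnorm{\y_t-\x_{t+1}}^2+\lrnorm{\y_{t+1}-\x_{t+1}}^2 \le 2\lrnorm{\y_{t+1}-\y_t}\,\lrnorm{\x_{t+1}-\y_{t+1}} - \lrnorm{\y_{t+1}-\y_t}^2$ rather than your polarization identity $\lrnorm{u}^2-\lrnorm{v}^2=\lrangle{u-v,u+v}$, but both arrive at the same inequality.
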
 
\begin{proof}

According to the law of cosines, we have 
\begin{align}
\nonumber
& -  \lrnorm{\y_t - \x_{t+1}}^2   + \lrnorm { \y_{t+1} - \x_{t+1} }^2 \\ \nonumber 
\le & 2 \lrnorm {\y_{t+1} - \y_t} \lrnorm{\x_{t+1} - \y_{t+1}} - \lrnorm{\y_{t+1} - \y_t}^2 \\ \nonumber
\le & 2\sqrt{R} \lrnorm {\y_{t+1} - \y_t}  - \lrnorm{\y_{t+1} - \y_t}^2 \\ \label{equa_cosine}
\le & 2\sqrt{R} \lrnorm {\y_{t+1} - \y_t}.
\end{align}

Thus, we obtain
\begin{align}
\nonumber
& \sum\limits_{t=1}^{T} \frac{1}{\eta_t} \lrincir{  -  \lrnorm{\y_t - \x_{t+1}}^2   + \lrnorm { \y_t- \x_t }^2 } \\ \nonumber
=&\sum\limits_{t=1}^{T-1}  \lrincir{  -  \frac{1}{\eta_t}\lrnorm{\y_t - \x_{t+1}}^2   + \frac{1}{\eta_{t+1}}\lrnorm { \y_{t+1} - \x_{t+1} }^2} + \frac{1}{\eta_1}\lrnorm{\y_1 - \x_1}^2 - \frac{1}{\eta_T} \lrnorm{\y_T - \x_{T+1}}^2 \\ \nonumber
\le &\sum\limits_{t=1}^{T-1}  \lrincir{  -  \frac{1}{\eta_t}\lrnorm{\y_t - \x_{t+1}}^2   + \frac{1}{\eta_{t}}\lrnorm { \y_{t+1} - \x_{t+1} }^2} + \sum\limits_{t=1}^{T-1} \lrincir{\frac{1}{\eta_{t+1}} - \frac{1}{\eta_t}} \lrnorm { \y_{t+1} - \x_{t+1} }^2  + \frac{1}{\eta_1}\lrnorm{\y_1 - \x_1}^2 \\ \nonumber
\le &\sum\limits_{t=1}^{T-1}  \lrincir{  -  \frac{1}{\eta_t}\lrnorm{\y_t - \x_{t+1}}^2   + \frac{1}{\eta_{t}}\lrnorm { \y_{t+1} - \x_{t+1} }^2} + R\sum\limits_{t=1}^{T-1} \lrincir{\frac{1}{\eta_{t+1}} - \frac{1}{\eta_t}} +\frac{R}{\eta_1} \\ \nonumber
\refabovecir{\le}{\textcircled{1}} & 2\sqrt{R}  \sum_{t=1}^{T-1}\frac{1}{\eta_t}\lrincir{\lrnorm{\y_{t+1} - \y_t}}  + \frac{R}{\eta_T}.
\end{align} $\textcircled{1}$ holds due to \eqref{equa_cosine}.
The proof is completed.
\end{proof}

\begin{Lemma}
\label{lemma_seriez_upper_bound_temp}
For any $0\le \gamma <1$, we have
\begin{align}
\nonumber
\sum_{t=1}^{T} \frac{1}{t^{\gamma}} \le \frac{1}{1-\gamma} T^{1-\gamma}.
\end{align}

\end{Lemma}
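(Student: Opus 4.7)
The plan is to prove this via the standard integral comparison test for decreasing functions. The intuition is that the partial sum $\sum_{t=1}^T t^{-\gamma}$ is a lower Riemann sum for $\int_0^T s^{-\gamma}\,ds$ (when $\gamma > 0$ so that $s^{-\gamma}$ is decreasing), and the integral evaluates exactly to $T^{1-\gamma}/(1-\gamma)$ precisely because $\gamma < 1$ makes the integral convergent near the origin.

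First I would dispatch the trivial case $\gamma = 0$ separately, where the statement reduces to $T \le T$, an equality. For the remaining range $\gamma \in (0,1)$, I would observe that the function $s \mapsto s^{-\gamma}$ is strictly decreasing on $(0, \infty)$, so for every integer $t \ge 1$ and every $s \in [t-1, t]$ we have $s^{-\gamma} \ge t^{-\gamma}$. Integrating this pointwise inequality over $[t-1, t]$ gives
\begin{align*}
t^{-\gamma} \le \int_{t-1}^{t} s^{-\gamma}\,ds.
\end{align*}

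Summing from $t=1$ to $T$ and using the additivity of the integral, the right-hand side telescopes into a single integral:
\begin{align*}
\sum_{t=1}^{T} \frac{1}{t^{\gamma}} \le \int_{0}^{T} s^{-\gamma}\,ds = \frac{T^{1-\gamma}}{1-\gamma},
\end{align*}
which is exactly the claimed bound. The only subtlety worth verifying is that the $t=1$ contribution $\int_0^1 s^{-\gamma}\,ds = \frac{1}{1-\gamma}$ is finite and indeed at least $1 = 1^{-\gamma}$, which uses precisely the hypothesis $\gamma < 1$ (and for $\gamma \ge 0$ we clearly have $\frac{1}{1-\gamma} \ge 1$). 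There is no genuine obstacle here; the assumption $\gamma < 1$ is tight in the sense that the integral diverges at $\gamma = 1$, so the bound necessarily blows up in that limit, consistent with the harmonic-sum logarithmic growth.
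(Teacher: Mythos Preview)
Your proof is correct and takes a genuinely different route from the paper. The paper proceeds by induction on $T$: the base case $T=1$ is immediate, and the inductive step reduces to showing $(T_0/(T_0+1))^{1-\gamma} + (1-\gamma)/(T_0+1) \le 1$, which the authors verify via a second-order Taylor-type bound $(1+x)^a \le 1 + ax + \tfrac{a(a-1)}{2}x^2$ for $-1<x<1$, $-1<a<0$. Your integral-comparison argument is the standard textbook approach and is both shorter and conceptually cleaner; it also makes transparent why the constant $\tfrac{1}{1-\gamma}$ is exactly right (it is the value of the improper integral $\int_0^1 s^{-\gamma}\,ds$) and why the bound must degenerate at $\gamma=1$. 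The paper's inductive route avoids improper integrals entirely but trades that for an auxiliary inequality that itself needs justification, so your argument is arguably more elementary overall.
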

\begin{proof}
We will use mathematical induction method to prove the result.
Given $0\le \gamma < 1$, it is trivial to verify that 
\begin{align}
\nonumber
\frac{1}{1^{\gamma}} = 1 \le \frac{1}{1-\gamma}.
\end{align} For an integer $T_0$, suppose $\sum_{t=1}^{T_0} \frac{1}{t^{\gamma}} \le \frac{1}{1-\gamma} T_0^{1-\gamma}$. Then, we have
\begin{align}
\nonumber
& \sum_{t=1}^{T_0+1} \frac{1}{t^{\gamma}} = \sum_{t=1}^{T_0} \frac{1}{t^{\gamma}} + \frac{1}{(T_0+1)^{\gamma}} \\ \nonumber
\le & \frac{1}{1-\gamma} T_0^{1-\gamma} + \frac{1}{(T_0+1)^{\gamma}} \\ \nonumber
= & \frac{1}{1-\gamma} (T_0+1)^{1-\gamma} \lrincir{ \lrincir{\frac{T_0}{T_0+1}}^{1-\gamma} + \frac{1-\gamma}{T_0 + 1} } \\ \nonumber
\refabovecir{\le}{\textcircled{1}} & \frac{1}{1-\gamma} (T_0+1)^{1-\gamma} \lrincir{ 1 - \frac{1-\gamma}{T_0+1} - \frac{\gamma (1-\gamma)}{2(T_0+1)^2} + \frac{1-\gamma}{T_0 + 1} } \\ \nonumber
\le & \frac{1}{1-\gamma} (T_0+1)^{1-\gamma}.
\end{align}
$\textcircled{1}$ holds according to Tylor expansion, that is, 
\begin{align}
\nonumber
(1+x)^{a} \le 1 + ax +\frac{a(a-1)}{2}x^2,
\end{align} holds for $-1 < x < 1$ and $-1< a < 0$.

It finally compltes the proof.
\end{proof}

%
%

\bibliographystyle{abbrvnat}
\bibliography{reference}

\begin{thebibliography}{10}
\providecommand{\url}[1]{#1}
\csname url@samestyle\endcsname
\providecommand{\newblock}{\relax}
\providecommand{\bibinfo}[2]{#2}
\providecommand{\BIBentrySTDinterwordspacing}{\spaceskip=0pt\relax}
\providecommand{\BIBentryALTinterwordstretchfactor}{4}
\providecommand{\BIBentryALTinterwordspacing}{\spaceskip=\fontdimen2\font plus
\BIBentryALTinterwordstretchfactor\fontdimen3\font minus
  \fontdimen4\font\relax}
\providecommand{\BIBforeignlanguage}[2]{{%
\expandafter\ifx\csname l@#1\endcsname\relax
\typeout{** WARNING: IEEEtran.bst: No hyphenation pattern has been}%
\typeout{** loaded for the language `#1'. Using the pattern for}%
\typeout{** the default language instead.}%
\else
\language=\csname l@#1\endcsname
\fi
#2}}
\providecommand{\BIBdecl}{\relax}
\BIBdecl

\bibitem{Zinkevich:2003}
M.~Zinkevich, ``Online convex programming and generalized infinitesimal
  gradient ascent,'' in \emph{Proceedings of International Conference on
  Machine Learning (ICML)}, 2003, pp. 928--935.

\bibitem{ShalevShwartz:2012dz}
S.~Shalev-Shwartz, ``{Online Learning and Online Convex Optimization},''
  \emph{Foundations and Trends{\textregistered} in Machine Learning}, vol.~4,
  no.~2, pp. 107--194, 2012.

\bibitem{Hazan2016Introduction}
E.~Hazan, ``Introduction to online convex optimization,'' \emph{Foundations and
  Trends in Optimization}, vol.~2, no. 3-4, pp. 157--325, 2016.

\bibitem{pmlr-v51-mohri16}
M.~Mohri and S.~Yang, ``Accelerating online convex optimization via adaptive
  prediction,'' in \emph{Proceedings of the 19th International Conference on
  Artificial Intelligence and Statistics (AISTATS)}, A.~Gretton and C.~C.
  Robert, Eds., vol.~51, 09--11 May 2016, pp. 848--856.

\bibitem{Zhang:2018tu}
L.~Zhang, T.~Yang, rong jin, and Z.-H. Zhou, ``Dynamic regret of strongly
  adaptive methods,'' in \emph{Proceedings of the 35th International Conference
  on Machine Learning (ICML)}, 10--15 Jul 2018, pp. 5882--5891.

\bibitem{pmlr-v54-jun17a}
K.-S. Jun, F.~Orabona, S.~Wright, and R.~Willett, ``Improved strongly adaptive
  online learning using coin betting,'' in \emph{Proceedings of the 20th
  International Conference on Artificial Intelligence and Statistics
  (AISTATS)}, A.~Singh and J.~Zhu, Eds., vol.~54, 20--22 Apr 2017, pp.
  943--951.

\bibitem{pmlr-v23-jain12}
P.~Jain, P.~Kothari, and A.~Thakurta, ``Differentially private online
  learning,'' in \emph{Proceedings of International Conference on Learning
  Theory (COLT)}, vol.~23, 2012, pp. 1--34.

\bibitem{pmlr-v70-zhang17g}
W.~Zhang, P.~Zhao, W.~Zhu, S.~C.~H. Hoi, and T.~Zhang, ``Projection-free
  distributed online learning in networks,'' in \emph{Proceedings of the 34th
  International Conference on Machine Learning}, D.~Precup and Y.~W. Teh, Eds.,
  International Convention Centre, Sydney, Australia, 06--11 Aug 2017, pp.
  4054--4062.

\bibitem{pmlr-v51-chaudhuri16}
S.~Chaudhuri and A.~T. Tewari, ``Online learning to rank with feedback at the
  top,'' in \emph{Proceedings of the 19th International Conference on
  Artificial Intelligence and Statistics (AISTATS)}, A.~Gretton and C.~C.
  Robert, Eds., vol.~51, 09--11 May 2016, pp. 277--285.

\bibitem{Liu2017}
C.~Liu, T.~Jin, S.~C.~H. Hoi, P.~Zhao, and J.~Sun, ``Collaborative topic
  regression for online recommender systems: an online and bayesian approach,''
  \emph{Machine Learning}, vol. 106, no.~5, pp. 651--670, May 2017.

\bibitem{Awerbuch:2007:OCF}
B.~Awerbuch and T.~P. Hayes, ``Online collaborative filtering with nearly
  optimal dynamic regret,'' in \emph{Proceedings of the Nineteenth Annual ACM
  Symposium on Parallel Algorithms and Architectures (SPAA)}, 2007, pp.
  315--319.

\bibitem{Nair:2004}
V.~Nair and J.~J. Clark, ``An unsupervised, online learning framework for
  moving object detection,'' in \emph{Proceedings of the IEEE Conference on
  Computer Vision and Pattern Recognition (CVPR)}, 2004, pp. 317--325.

\bibitem{DBLP:rakhlin}
A.~Rakhlin, K.~Sridharan, and A.~Tewari, ``Online learning: Stochastic,
  constrained, and smoothed adversaries,'' in \emph{Proceedings of Advances in
  Neural Information Processing Systems (NIPS)}, 2011, pp. 1764--1772.

\bibitem{pmlr-v84-liu18a}
S.~Liu, J.~Chen, P.-Y. Chen, and A.~Hero, ``Zeroth-order online alternating
  direction method of multipliers: Convergence analysis and applications,'' in
  \emph{Proceedings of the Twenty-First International Conference on Artificial
  Intelligence and Statistics (AISTATS)}, A.~Storkey and F.~Perez-Cruz, Eds.,
  vol.~84, 09--11 Apr 2018, pp. 288--297.

\bibitem{Gao:2017:SOL}
X.~Gao, S.~C.~H. Hoi, Y.~Zhang, J.~Zhou, J.~Wan, Z.~Chen, J.~Li, and J.~Zhu,
  ``Sparse online learning of image similarity,'' \emph{ACM Trans. Intell.
  Syst. Technol.}, vol.~8, no.~5, pp. 64:1--64:22, Aug. 2017.

\bibitem{JMLR:v17:14-148}
J.~Lu, S.~C. Hoi, J.~Wang, P.~Zhao, and Z.-Y. Liu, ``Large scale online kernel
  learning,'' \emph{Journal of Machine Learning Research}, vol.~17, no.~47, pp.
  1--43, 2016.

\bibitem{pmlr-v84-shen18a}
Y.~Shen, T.~Chen, and G.~Giannakis, ``Online ensemble multi-kernel learning
  adaptive to non-stationary and adversarial environments,'' in
  \emph{Proceedings of the Twenty-First International Conference on Artificial
  Intelligence and Statistics (AISTATS)}, A.~Storkey and F.~Perez-Cruz, Eds.,
  vol.~84, 09--11 Apr 2018, pp. 2037--2046.

\bibitem{Flaxman:2005um}
A.~Flaxman, A.~T. Kalai, and H.~B. McMahan, ``{Online convex optimization in
  the bandit setting - gradient descent without a gradient.}'' in
  \emph{Proceedings of ACM/SIAM Symposium on Discrete Algorithms (SODA)}, 2005.

\bibitem{Arora:2012ta}
R.~Arora, O.~Dekel, and A.~Tewari, ``Online bandit learning against an adaptive
  adversary: from regret to policy regret,'' in \emph{Proceedings of the 29th
  International Conference on Machine Learning (ICML)}, 2012.

\bibitem{pmlr-v54-kwon17a}
J.~Kwon and V.~Perchet, ``Online learning and blackwell approachability with
  partial monitoring: Optimal convergence rates,'' in \emph{Proceedings of the
  20th International Conference on Artificial Intelligence and Statistics
  (AISTATS)}, A.~Singh and J.~Zhu, Eds., vol.~54, 20--22 Apr 2017, pp.
  604--613.

\bibitem{pmlr-v51-kocak16}
T.~Kocák, G.~Neu, and M.~Valko, ``Online learning with noisy side
  observations,'' in \emph{Proceedings of the 19th International Conference on
  Artificial Intelligence and Statistics (AISTATS)}, A.~Gretton and C.~C.
  Robert, Eds., vol.~51, 09--11 May 2016, pp. 1186--1194.

\bibitem{Hall:2013vr}
E.~C. Hall and R.~Willett, ``{Dynamical Models and tracking regret in online
  convex programming.}'' in \emph{Proceedings of International Conference on
  International Conference on Machine Learning (ICML)}, 2013.

\bibitem{Hall:2015ct}
E.~C. Hall and R.~M. Willett, ``{Online Convex Optimization in Dynamic
  Environments.}'' \emph{IEEE Journal of Selected Topics in Signal Processing},
  vol.~9, no.~4, pp. 647--662, 2015.

\bibitem{Mokhtari:2016jz}
A.~Mokhtari, S.~Shahrampour, A.~Jadbabaie, and A.~Ribeiro, ``{Online
  optimization in dynamic environments: Improved regret rates for strongly
  convex problems},'' in \emph{Proceedings of IEEE Conference on Decision and
  Control (CDC)}.\hskip 1em plus 0.5em minus 0.4em\relax IEEE, 2016, pp.
  7195--7201.

\bibitem{Yang:2016ud}
T.~Yang, L.~Zhang, R.~Jin, and J.~Yi, ``{Tracking Slowly Moving Clairvoyant -
  Optimal Dynamic Regret of Online Learning with True and Noisy Gradient.}'' in
  \emph{Proceedings of the 34th International Conference on Machine Learning
  (ICML)}, 2016.

\bibitem{Zhang:2016wl}
L.~Zhang, T.~Yang, J.~Yi, R.~Jin, and Z.-H. Zhou, ``{Improved Dynamic Regret
  for Non-degenerate Functions},'' in \emph{Proceedings of Neural Information
  Processing Systems (NIPS)}, 2017.

\bibitem{Shahrampour:2018dm}
S.~Shahrampour and A.~Jadbabaie, ``{Distributed Online Optimization in Dynamic
  Environments Using Mirror Descent},'' \emph{IEEE Transactions on Automatic
  Control}, vol.~63, no.~3, pp. 714--725, 2018.

\bibitem{Jadbabaie:2015wg}
A.~Jadbabaie, A.~Rakhlin, S.~Shahrampour, and K.~Sridharan, ``{Online
  Optimization : Competing with Dynamic Comparators},'' in \emph{Proceedings of
  International Conference on Artificial Intelligence and Statistics
  (AISTATS)}, 2015, pp. 398--406.

\bibitem{Duchi:2011}
J.~Duchi, E.~Hazan, and Y.~Singer, ``Adaptive subgradient methods for online
  learning and stochastic optimization,'' \emph{Journal of Machine Learning
  Research (JMLR)}, vol.~12, pp. 2121--2159, 2011.

\bibitem{Bedi:2018ef}
A.~S. Bedi, P.~Sarma, and K.~Rajawat, ``Tracking moving agents via inexact
  online gradient descent algorithm.'' \emph{IEEE Journal of Selected Topics in
  Signal Processing}, vol.~12, no.~1, pp. 202--217, 2018.

\bibitem{pmlr-v84-gao18a}
X.~Gao, X.~Li, and S.~Zhang, ``Online learning with non-convex losses and
  non-stationary regret,'' in \emph{Proceedings of the Twenty-First
  International Conference on Artificial Intelligence and Statistics
  (AISTATS)}, A.~Storkey and F.~Perez-Cruz, Eds., vol.~84, 09--11 Apr 2018, pp.
  235--243.

\bibitem{pmlr-v48-jenatton16}
R.~Jenatton, J.~Huang, and C.~Archambeau, ``Adaptive algorithms for online
  convex optimization with long-term constraints,'' in \emph{Proceedings of The
  33rd International Conference on Machine Learning (ICML)}, vol.~48, 20--22
  Jun 2016, pp. 402--411.

\bibitem{Zhu:2015tr}
C.~Zhu and H.~Xu, ``{Online Gradient Descent in Function Space},''
  \emph{arXiv.org}, 2015.

\bibitem{Besbes:2015gb}
O.~Besbes, Y.~Gur, and A.~J. Zeevi, ``{Non-Stationary Stochastic
  Optimization.}'' \emph{Operations Research}, vol.~63, no.~5, pp. 1227--1244,
  2015.

\bibitem{Chiang2012Online}
C.~K. Chiang, T.~Yang, C.~J. Lee, M.~Mahdavi, C.~J. Lu, R.~Jin, and S.~Zhu,
  ``Online optimization with gradual variations,'' \emph{Journal of Machine
  Learning Research}, vol.~23, 2012.

\bibitem{Chen:2017tt}
T.~Chen, Q.~Ling, and G.~B. Giannakis, ``An online convex optimization approach
  to proactive network resource allocation,'' \emph{IEEE Transactions on Signal
  Processing}, vol.~65, pp. 6350--6364, 2017.

\bibitem{Jenatton:2016wp}
R.~Jenatton, J.~Huang, D.~Csiba, and C.~Archambeau, ``Online optimization and
  regret guarantees for non-additive long-term constraints,'' \emph{arXiv.org},
  2016.

\bibitem{Gyorgy:2016}
A.~Gy\"{o}rgy and C.~Szepesv\'{a}ri, ``Shifting regret, mirror descent, and
  matrices,'' in \emph{Proceedings of the 33rd International Conference on
  International Conference on Machine Learning (ICML)}.\hskip 1em plus 0.5em
  minus 0.4em\relax JMLR.org, 2016, pp. 2943--2951.

\bibitem{NeurIPS:2018:ZhangB}
L.~Zhang, S.~Lu, and Z.-H. Zhou, ``Adaptive online learning in dynamic
  environments,'' in \emph{Advances in Neural Information Processing Systems 31
  (NeurIPS)}, 2018, pp. 1323--1333.

\bibitem{Herbster1998}
M.~Herbster and M.~K. Warmuth, ``Tracking the best expert,'' \emph{Machine
  Learning}, vol.~32, no.~2, pp. 151--178, Aug 1998.

\bibitem{Gyorgy:2005wo}
A.~Gy{\"o}rgy, T.~Linder, and G.~Lugosi, ``{Tracking the Best of Many
  Experts.}'' \emph{Proceedings of Conference on Learning Theory (COLT)}, 2005.

\bibitem{Gyorgy:2012wa}
A.~Gyorgy, T.~Linder, and G.~Lugosi, ``Efficient tracking of large classes of
  experts,'' \emph{IEEE Transactions on Information Theory}, vol.~58, no.~11,
  pp. 6709--6725, Nov 2012.

\bibitem{Mourtada:2017vn}
J.~Mourtada and O.-A. Maillard, ``{Efficient tracking of a growing number of
  experts},'' \emph{arXiv.org}, Aug. 2017.

\bibitem{JMLR:v17:13-533}
D.~Adamskiy, W.~M. Koolen, A.~Chernov, and V.~Vovk, ``A closer look at adaptive
  regret,'' \emph{Journal of Machine Learning Research}, vol.~17, no.~23, pp.
  1--21, 2016.

\bibitem{NIPS2016_6536}
C.-Y. Wei, Y.-T. Hong, and C.-J. Lu, ``Tracking the best expert in
  non-stationary stochastic environments,'' in \emph{Proceedings of Advances in
  Neural Information Processing Systems}, D.~D. Lee, M.~Sugiyama, U.~V.
  Luxburg, I.~Guyon, and R.~Garnett, Eds., 2016, pp. 3972--3980.

\bibitem{cesabianchi:hal}
N.~Cesa-Bianchi, P.~Gaillard, G.~Lugosi, and G.~Stoltz, ``{Mirror Descent Meets
  Fixed Share (and feels no regret)},'' in \emph{{NIPS 2012}}, 2012, p. Paper
  471.

\bibitem{pmlr-v84-mohri18a}
M.~Mohri and S.~Yang, ``Competing with automata-based expert sequences,'' in
  \emph{Proceedings of the Twenty-First International Conference on Artificial
  Intelligence and Statistics}, A.~Storkey and F.~Perez-Cruz, Eds., vol.~84,
  09--11 Apr 2018, pp. 1732--1740.

\bibitem{Daniely:2015}
A.~Daniely, A.~Gonen, and S.~Shalev-Shwartz, ``Strongly adaptive online
  learning,'' in \emph{Proceedings of the 32Nd International Conference on
  International Conference on Machine Learning (ICML)}, ser. ICML'15.\hskip 1em
  plus 0.5em minus 0.4em\relax JMLR.org, 2015, pp. 1405--1411.

\bibitem{Luo:2015ud}
H.~Luo and R.~E. Schapire, ``{Achieving All with No Parameters -
  AdaNormalHedge.}'' in \emph{Proceedings of the Conference on Learning Theory
  (COLT)}, 2015.

\end{thebibliography}


\end{document}